\newcolumntype{C}[1]{>{\centering\arraybackslash}b{\dimexpr #1\linewidth}}
\newcolumntype{L}[1]{>{\raggedright\arraybackslash}b{\dimexpr #1\linewidth}}
\newcolumntype{M}[2]{>{\centering\arraybackslash}b{\dimexpr #1\linewidth+#2\tabcolsep}}
\newtheorem{theorem}{Theorem}
\newtheorem{lemma}{Lemma}
\newtheorem{corollary}{Corollary}
\newtheorem{definition}{Definition}
\newcommand{\defref}[1]{Definition~\ref{#1}}
\newcommand{\thmref}[1]{Theorem~\ref{#1}}
\newcommand{\lemref}[1]{Lemma~\ref{#1}}
\newcommand{\corref}[1]{Corollary~\ref{#1}}
\newcommand{\axref}[1]{Axiom~\ref{#1}}
\newcommand{\multiaxref}[2]{Axioms~\ref{#1}--\ref{#2}}
\newcommand{\figref}[1]{Figure~\ref{#1}}
\newcommand{\algref}[1]{Algorithm~\ref{#1}}
\newcommand{\tabref}[1]{Table~\ref{#1}}
\newcommand{\Ein}{\mathbb{E}}
\newcommand{\E}{\mathop{\mathbb{E}}}
\newcommand{\oprho}{\mathop{\rho}}
\newcommand{\oprhoplus}{\mathop{\rho^+}}
\title{Risk-Averse Model Uncertainty for \\Distributionally Robust Safe Reinforcement Learning}
\author{%
  James Queeney\thanks{Work partly done during an internship at Mitsubishi Electric Research Laboratories.} \\
  Division of Systems Engineering \\
  Boston University\\
  \texttt{jqueeney@bu.edu} \\
  \And
  Mouhacine Benosman \\
  Mitsubishi Electric Research Laboratories \\
  \texttt{benosman@merl.com} \\
}
\begin{document}


\maketitle


\begin{abstract}
Many real-world domains require safe decision making in uncertain environments. In this work, we introduce a deep reinforcement learning framework for approaching this important problem. We consider a distribution over transition models, and apply a risk-averse perspective towards model uncertainty through the use of coherent distortion risk measures. We provide robustness guarantees for this framework by showing it is equivalent to a specific class of distributionally robust safe reinforcement learning problems. Unlike existing approaches to robustness in deep reinforcement learning, however, our formulation does not involve minimax optimization. This leads to an efficient, model-free implementation of our approach that only requires standard data collection from a single training environment. In experiments on continuous control tasks with safety constraints, we demonstrate that our framework produces robust performance and safety at deployment time across a range of perturbed test environments.
\end{abstract}


\section{Introduction}

In many real-world decision making applications, it is important to satisfy safety requirements while achieving a desired goal. In addition, real-world environments often involve uncertain or changing conditions. Therefore, in order to reliably deploy data-driven decision making methods such as deep reinforcement learning (RL) in these settings, they must deliver \emph{robust performance and safety} even in the presence of uncertainty. Recently, techniques have been developed to handle safety constraints within the deep RL framework \citep{achiam_2017,ray_2019,tessler_2019_constrained,liu_2020,stooke_2020,xu_2021,liu_2022}, but these safe RL algorithms only focus on performance and safety in the training environment. They do not consider uncertainty about the true environment at deployment time due to unknown disturbances or irreducible modeling errors, which we refer to as \emph{model uncertainty}. In this work, we introduce a framework that incorporates model uncertainty into safe RL. In order for our framework to be useful, we emphasize the importance of (i)~an efficient deep RL implementation during training and (ii)~robustness guarantees on performance and safety upon deployment.

Existing robust RL methods address the issue of model uncertainty, but they can be difficult to implement and are not always suitable in real-world decision making settings. Robust RL focuses on worst-case environments in an uncertainty set, which requires solving complex minimax optimization problems throughout training. This is typically approximated in a deep RL setting through direct interventions with a learned adversary \citep{pinto_2017,tessler_2019_adversarial,vinitsky_2020}, or through the use of parametric uncertainty with multiple simulated training environments \citep{rajeswaran_2017,mankowitz_2020,mankowitz_2021}. However, we do not always have access to fast, high-fidelity simulators for training \citep{cao_2022,mowlavi_2022,xu_2022}. In these cases, we must be able to incorporate robustness to model uncertainty without relying on multiple training environments or potentially dangerous adversarial interventions, as real-world data collection may be necessary.

A more informative way to represent model uncertainty is to instead consider a distribution over potential environments. Domain randomization \citep{peng_2018} collects training data from a range of environments by randomizing across parameter values in a simulator, and optimizes for average performance. This approach to model uncertainty avoids minimax formulations and works well in practice \citep{andrychowicz_2020}, but lacks robustness guarantees. In addition, domain randomization focuses on parametric uncertainty, which still requires detailed simulator access and domain knowledge to define the training distribution.

In this work, we introduce a general approach to safe RL in the presence of model uncertainty that addresses the main shortcomings of existing methods. In particular, we consider a distribution over potential environments, and apply a \emph{risk-averse perspective towards model uncertainty}. Through the use of coherent distortion risk measures, this leads to a safe RL framework with robustness guarantees that does not involve difficult minimax formulations. Using this framework, we show how we can learn safe policies that are robust to model uncertainty, without the need for detailed simulator access or adversarial interventions during training. Our main contributions are as follows:
\begin{enumerate}
    \item We reformulate the safe RL problem to incorporate a risk-averse perspective towards model uncertainty through the use of coherent distortion risk measures, and we introduce the corresponding Bellman operators.
    \item From a theoretical standpoint, we provide robustness guarantees for our framework by showing it is equivalent to a specific class of distributionally robust safe RL problems.
    \item We propose an efficient deep RL implementation that avoids the difficult minimax formulation present in robust RL and only uses data collected from a single training environment.
    \item We demonstrate the robust performance and safety of our framework through experiments on continuous control tasks with safety constraints in the Real-World RL Suite \citep{dulacarnold_2020,dulacarnold_2021}.
\end{enumerate}


\section{Preliminaries}


\paragraph{Safe reinforcement learning}
In this work, we consider RL in the presence of safety constraints. We model this sequential decision making problem as an infinite-horizon, discounted Constrained Markov Decision Process (CMDP) \citep{altman_1999} defined by the tuple $(\mathcal{S},\mathcal{A},p,r,c,d_0,\gamma)$, where $\mathcal{S}$ is the set of states, $\mathcal{A}$ is the set of actions, $p: \mathcal{S} \times \mathcal{A} \rightarrow P(\mathcal{S})$ is the transition model where $P(\mathcal{S})$ represents the space of probability measures over $\mathcal{S}$, $r,c: \mathcal{S} \times \mathcal{A} \rightarrow \mathbb{R}$ are the reward function and cost function used to define the objective and constraint, respectively, $d_0 \in P(\mathcal{S})$ is the initial state distribution, and $\gamma$ is the discount rate. We focus on the setting with a single constraint, but all results can be extended to the case of multiple constraints.

We model the agent's decisions as a stationary policy $\pi: \mathcal{S} \rightarrow P(\mathcal{A})$. For a given CMDP and policy $\pi$, we write the expected total discounted rewards and costs as $J_{p,r}(\pi) = \Ein_{\tau \sim (\pi,p)} \left[ \sum_{t=0}^{\infty} \gamma^t r(s_t,a_t) \right]$ and $J_{p,c}(\pi) = \Ein_{\tau \sim (\pi,p)} \left[ \sum_{t=0}^{\infty} \gamma^t c(s_t,a_t) \right]$, respectively, where $\tau \sim (\pi,p)$ represents a trajectory sampled according to $s_0 \sim d_0$, $a_t \sim \pi(\, \cdot \mid s_t)$, and $s_{t+1} \sim p(\, \cdot \mid s_t, a_t)$. The goal of safe RL is to find a policy $\pi$ that maximizes the constrained optimization problem
\begin{equation}\label{eq:safe_rl}
\max_{\pi} \,\, J_{p,r}(\pi) \quad \textnormal{s.t.} \quad J_{p,c}(\pi) \leq B,
\end{equation}
where $B$ is a safety budget on expected total discounted costs.

We write the corresponding state-action value functions (i.e., Q functions) for a given transition model $p$ and policy $\pi$ as $Q^{\pi}_{p,r}(s,a)$ and $Q^{\pi}_{p,c}(s,a)$, respectively. Off-policy optimization techniques \citep{xu_2021,liu_2022} find a policy that maximizes \eqref{eq:safe_rl} by solving at each iteration the related optimization problem
\begin{equation}\label{eq:safe_rl_update}
\max_{\pi} \,\, \E_{s \sim \mathcal{D}} \left[ \E_{a \sim \pi(\cdot \mid s)} \left[ Q^{\pi_k}_{p,r}(s,a) \right] \right] \quad \textnormal{s.t.} \quad \E_{s \sim \mathcal{D}} \left[ \E_{a \sim \pi(\cdot \mid s)} \left[ Q^{\pi_k}_{p,c}(s,a) \right] \right] \leq B,
\end{equation}
where $\pi_k$ is the current policy and $\mathcal{D}$ is a replay buffer containing data collected in the training environment. Note that $Q^{\pi}_{p,r}(s,a)$ and $Q^{\pi}_{p,c}(s,a)$ are the respective fixed points of the Bellman operators
\begin{align*}
    \mathcal{T}^{\pi}_{p,r}Q(s,a) &:= r(s,a) + \gamma \E_{s' \sim p_{s,a}} \left[ \E_{a' \sim \pi(\cdot \mid s')} \left[ Q(s',a') \right] \right], \\
    \mathcal{T}^{\pi}_{p,c}Q(s,a) &:= c(s,a) + \gamma \E_{s' \sim p_{s,a}} \left[ \E_{a' \sim \pi(\cdot \mid s')} \left[ Q(s',a') \right] \right].
\end{align*}


\paragraph{Model uncertainty in reinforcement learning}
Rather than focusing on a single CMDP with transition model $p$, we incorporate uncertainty about the transition model by considering a distribution $\mu$ over models. We focus on distributions of the form $\mu = \prod_{(s,a) \in \mathcal{S} \times \mathcal{A}} \mu_{s,a}$, where $\mu_{s,a}$ represents a distribution over transition models $p_{s,a} = p(\, \cdot \mid s,a) \in P(\mathcal{S})$ at a given state-action pair and $\mu$ is the product over all $\mu_{s,a}$. This is known as rectangularity, and is a common assumption in the literature \citep{xu_2010,yu_2016,derman_2018,derman_2020_distributional,chen_2020}. Note that $\mu_{s,a} \in P(\mathcal{M})$, where we write $\mathcal{M} = P(\mathcal{S})$ to denote model space. Compared to robust RL methods that apply uncertainty sets over transition models, the use of a distribution $\mu$ over transition models is a more informative way to represent model uncertainty that does not require solving for worst-case environments (i.e., \emph{does not introduce a minimax formulation}).

In order to incorporate robustness to the choice of $\mu$, distributionally robust MDPs \citep{xu_2010,yu_2016} consider an ambiguity set $\mathcal{U} = \bigotimes_{(s,a) \in \mathcal{S} \times \mathcal{A}} \mathcal{U}_{s,a}$ of distributions over transition models, where $\mu_{s,a} \in \mathcal{U}_{s,a} \subseteq P(\mathcal{M})$. The goal of distributionally robust RL is to optimize the worst-case average performance across all distributions contained in $\mathcal{U}$. In this work, we will show that a risk-averse perspective towards model uncertainty defined by $\mu$ is equivalent to distributionally robust RL for appropriate choices of ambiguity sets in the objective and constraint of a CMDP. However, our use of risk measures \emph{avoids the need to solve for worst-case distributions in $\mathcal{U}$ throughout training}.


\paragraph{Risk measures}
Consider the probability space $(\mathcal{M},\mathcal{F},\mu_{s,a})$, where $\mathcal{F}$ is a $\sigma$-algebra on $\mathcal{M}$ and $\mu_{s,a} \in P(\mathcal{M})$ defines a probability measure over $\mathcal{M}$. Let $\mathcal{Z}$ be a space of random variables defined on this probability space, and let $\mathcal{Z}^*$ be its corresponding dual space. A real-valued risk measure $\rho: \mathcal{Z} \rightarrow \mathbb{R}$ summarizes a random variable as a value on the real line. In this section, we consider cost random variables $Z \in \mathcal{Z}$ where a lower value of $\rho(Z)$ is better. We can define a corresponding risk measure $\rho^+$ for reward random variables through an appropriate change in sign, where $\rho^+(Z) = - \rho(-Z)$. Risk-sensitive methods typically focus on classes of risk measures with desirable properties \citep{majumdar_2020}, such as coherent risk measures \citep{artzner_1999} and distortion risk measures \citep{wang_1996,dhaene_2012}.

\begin{definition}[Coherent risk measure]\label{def:coherent}
A risk measure $\rho$ is a \emph{coherent risk measure} if it satisfies monotonicity, translation invariance, positive homogeneity, and convexity.
\end{definition}

\begin{definition}[Distortion risk measure]\label{def:distortion}
Let $g: [0,1] \rightarrow [0,1]$ be a non-decreasing, left-continuous function with $g(0)=0$ and $g(1) = 1$. A \emph{distortion risk measure} with respect to $g$ is defined as
\begin{equation*}
\rho(Z) = \int_{0}^{1} F_Z^{-1}(u)d\tilde{g}(u),
\end{equation*}
where $F_Z^{-1}$ is the inverse cumulative distribution function of $Z$ and $\tilde{g}(u) = 1 - g(1-u)$.
\end{definition}

A distortion risk measure is coherent if and only if $g$ is concave \citep{wirch_2003}. In this work, we focus on the class of \emph{coherent distortion risk measures}. We will leverage properties of coherent risk measures to provide robustness guarantees for our framework, and we will leverage properties of distortion risk measures to propose an efficient, model-free implementation that does not involve minimax optimization. See the Appendix for additional details on the properties of coherent distortion risk measures. Many commonly used risk measures belong to this class, including expectation, conditional value-at-risk (CVaR), and the Wang transform \citep{wang_2000} for $\eta \geq 0$ which is defined by the distortion function $g_{\eta}(u) = \Phi ( \Phi^{-1}(u) + \eta )$, where $\Phi$ is the standard Normal cumulative distribution function. 


\section{Related work}


\paragraph{Safe reinforcement learning}
The CMDP framework is the most popular approach to safety in RL, and several deep RL algorithms have been developed to solve the constrained optimization problem in \eqref{eq:safe_rl}. These include primal-dual methods that consider the Lagrangian relaxation of \eqref{eq:safe_rl} \citep{ray_2019,tessler_2019_constrained,stooke_2020}, algorithms that compute closed-form solutions to related or approximate versions of \eqref{eq:safe_rl} \citep{achiam_2017,liu_2022}, and direct methods for constraint satisfaction such as the use of barriers \citep{liu_2020} or immediate switching between the objective and constraint \citep{xu_2021}. All of these approaches are designed to satisfy expected cost constraints for a single CMDP observed during training. In our work, on the other hand, we consider a distribution over possible transition models.


\paragraph{Uncertainty in reinforcement learning}
Our work focuses on irreducible uncertainty about the true environment at deployment time, which we refer to as \emph{model uncertainty} and represent using a distribution $\mu$ over transition models. The most popular approach that incorporates model uncertainty in this way is domain randomization \citep{tobin_2017,peng_2018}, which randomizes across parameter values in a simulator and trains a policy to maximize average performance over this training distribution. This represents a risk-neutral attitude towards model uncertainty, which has been referred to as a soft-robust approach \citep{derman_2018}. Distributionally robust MDPs incorporate robustness to the choice of $\mu$ by instead considering a set of distributions \citep{xu_2010,yu_2016,derman_2020_distributional,chen_2020}, but application of this distributionally robust framework has remained limited in deep RL as it leads to a difficult minimax formulation that requires solving for worst-case distributions over transition models.

Robust RL represents an alternative approach to model uncertainty that considers uncertainty sets of transition models \citep{iyengar_2005,nilim_2005}. A major drawback of robust RL is the need to calculate worst-case environments during training, which is typically approximated through the use of parametric uncertainty with multiple training environments \citep{rajeswaran_2017,mankowitz_2020,mankowitz_2021} or a trained adversary that directly intervenes during trajectory rollouts \citep{pinto_2017,tessler_2019_adversarial,vinitsky_2020}. Unlike these methods, we propose a robust approach to model uncertainty based on a distribution $\mu$ over models, which does not require access to a range of simulated training environments, does not impact data collection during training, and does not involve minimax optimization problems.

In contrast to irreducible model uncertainty, \emph{epistemic uncertainty} captures estimation error that can be reduced during training through data collection. Epistemic uncertainty has been considered in the estimation of Q functions \citep{osband_2016,osband_2018,bharadhwaj_2021} and learned transition models \citep{chua_2018,kurutach_2018,janner_2019,rajeswaran_2020,as_2022}, and has been applied to promote both exploration and safety in a fixed MDP. Finally, risk-sensitive methods typically focus on the \emph{aleatoric uncertainty} in RL, which refers to the range of stochastic outcomes within a single MDP. Rather than considering the standard expected value objective, they learn risk-sensitive policies over this distribution of possible outcomes in a fixed MDP \citep{shen_2014,chow_2015,tamar_2015,keramati_2020,la_2022}. Distributional RL \citep{bellemare_2017} trains critics that estimate the full distribution of future returns due to aleatoric uncertainty, and risk measures can be applied to these distributional critics for risk-sensitive learning \citep{dabney_2018,ma_2020}. We also consider the use of risk measures in our work, but different from standard risk-sensitive RL methods we apply a risk measure over \emph{model uncertainty} instead of aleatoric uncertainty. 


\section{Risk-averse model uncertainty for safe reinforcement learning}

The standard safe RL problem in \eqref{eq:safe_rl} focuses on performance and safety in a single environment with fixed transition model $p$. In this work, however, we are interested in a distribution of possible transition models $p \sim \mu$ rather than a fixed transition model. The distribution $\mu$ provides a natural way to capture our uncertainty about the unknown transition model at deployment time. Next, we must incorporate this model uncertainty into our problem formulation. Prior methods have done this by applying the expectation operator over $\mu_{s,a}$ at every transition \citep{derman_2018}. Instead, we adopt a risk-averse view towards model uncertainty in order to learn policies with robust performance and safety. We accomplish this by applying a coherent distortion risk measure $\rho$ \emph{with respect to model uncertainty} at every transition. 

We consider the risk-averse model uncertainty (RAMU) safe RL problem
\begin{equation}\label{eq:ramu_rl}
\max_{\pi} \,\, J_{\rho^+,r}(\pi) \quad \textnormal{s.t.} \quad J_{\rho,c}(\pi) \leq B,
\end{equation}
where we use $\rho^+$ and $\rho$ to account for reward and cost random variables, respectively, and we apply these coherent distortion risk measures over $p_{s,a} \sim \mu_{s,a}$ at every transition to define
\begin{align*}
   J_{\rho^+,r}(\pi) &:= \E_{s \sim d_0} \left[ \E_{a \sim \pi(\cdot \mid s)} \left[  r(s,a) + \gamma \oprhoplus_{p_{s,a} \sim \mu_{s,a}} \left( \E_{s' \sim p_{s,a}} \left[ \E_{a' \sim \pi(\cdot \mid s')} \left[ r(s',a') + \ldots \, \right] \right] \right) \right] \right], \\
   J_{\rho,c}(\pi) &:= \E_{s \sim d_0} \left[ \E_{a \sim \pi(\cdot \mid s)} \left[  c(s,a) + \gamma \oprho_{p_{s,a} \sim \mu_{s,a}} \left( \E_{s' \sim p_{s,a}} \left[ \E_{a' \sim \pi(\cdot \mid s')} \left[ c(s',a') + \ldots \, \right] \right] \right) \right] \right].
\end{align*}
The notation $\rho^+_{\, p_{s,a} \sim \mu_{s,a}}\left( \, \cdot \, \right)$ and $\rho_{\, p_{s,a} \sim \mu_{s,a}}\left( \, \cdot \, \right)$ emphasize that the stochasticity of the random variables are with respect to the transition models sampled from $\mu_{s,a}$. Note that we still apply expectations over the aleatoric uncertainty of the CMDP (i.e., the randomness associated with a stochastic transition model and stochastic policy), while being risk-averse with respect to model uncertainty. Because we are interested in learning policies that achieve robust performance \emph{and} robust safety at deployment time, we apply this risk-averse perspective to model uncertainty in both the objective and constraint of \eqref{eq:ramu_rl}.

We write the corresponding RAMU reward and cost Q functions as $Q^{\pi}_{\rho^+,r}(s,a)$ and $Q^{\pi}_{\rho,c}(s,a)$, respectively. Similar to the standard safe RL setting, we can apply off-policy techniques to solve the RAMU safe RL problem in \eqref{eq:ramu_rl} by iteratively optimizing
\begin{equation}\label{eq:ramu_rl_update}
    \max_{\pi} \,\, \E_{s \sim \mathcal{D}} \left[ \E_{a \sim \pi(\cdot \mid s)} \left[ Q^{\pi_k}_{\rho^+,r}(s,a) \right] \right] \quad \textnormal{s.t.} \quad \E_{s \sim \mathcal{D}} \left[ \E_{a \sim \pi(\cdot \mid s)} \left[ Q^{\pi_k}_{\rho,c}(s,a) \right] \right] \leq B.
\end{equation}
Therefore, we have replaced the standard Q functions for a fixed transition model $p$ in \eqref{eq:safe_rl_update} with our RAMU Q functions in \eqref{eq:ramu_rl_update}. 

We can write the RAMU Q functions recursively as
\begin{align*}
    Q^{\pi}_{\rho^+,r}(s,a) &= r(s,a) + \gamma \oprhoplus_{p_{s,a} \sim \mu_{s,a}} \left( \E_{s' \sim p_{s,a}} \left[ \E_{a' \sim \pi(\cdot \mid s')} \left[ Q^{\pi}_{\rho^+,r}(s',a')  \right] \right] \right), \\
    Q^{\pi}_{\rho,c}(s,a) &= c(s,a) + \gamma \oprho_{p_{s,a} \sim \mu_{s,a}} \left( \E_{s' \sim p_{s,a}} \left[ \E_{a' \sim \pi(\cdot \mid s')} \left[ Q^{\pi}_{\rho,c}(s',a')  \right] \right] \right).
\end{align*}
These recursive definitions motivate corresponding RAMU Bellman operators.

\begin{definition}[RAMU Bellman operators]\label{def:ramu_bellman}
For a given policy $\pi$, the \emph{RAMU Bellman operators} are defined as
\begin{align*}
    \mathcal{T}^{\pi}_{\rho^+,r} Q(s,a) &:= r(s,a) + \gamma \oprhoplus_{p_{s,a} \sim \mu_{s,a}} \left( \E_{s' \sim p_{s,a}} \left[ \E_{a' \sim \pi(\cdot \mid s')} \left[ Q(s',a')  \right] \right] \right), \\
    \mathcal{T}^{\pi}_{\rho,c} Q(s,a) &:= c(s,a) + \gamma \oprho_{p_{s,a} \sim \mu_{s,a}} \left( \E_{s' \sim p_{s,a}} \left[ \E_{a' \sim \pi(\cdot \mid s')} \left[ Q(s',a')  \right] \right] \right).
\end{align*}
\end{definition}

Note that the RAMU Bellman operators can also be interpreted as applying a coherent distortion risk measure over standard Bellman targets, which are random variables with respect to the transition model $p_{s,a} \sim \mu_{s,a}$ for a given state-action pair.

\begin{lemma}\label{lem:ramu_bellman_reformulate}
The RAMU Bellman operators can be written in terms of standard Bellman operators as
\begin{equation}
    \mathcal{T}^{\pi}_{\rho^+,r} Q(s,a) = \oprhoplus_{p_{s,a} \sim \mu_{s,a}} \left( \mathcal{T}^{\pi}_{p,r} Q(s,a) \right), \quad \mathcal{T}^{\pi}_{\rho,c} Q(s,a) = \oprho_{p_{s,a} \sim \mu_{s,a}} \left( \mathcal{T}^{\pi}_{p,c} Q(s,a) \right). \label{eq:rho_bellman}
\end{equation}
\end{lemma}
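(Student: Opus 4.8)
The plan is to prove the reward identity in \eqref{eq:rho_bellman}; the cost identity then follows by the identical argument with $\rho$ in place of $\rho^+$. The key observation is that the object inside the risk measure on the right-hand side, namely $\mathcal{T}^{\pi}_{p,r} Q(s,a) = r(s,a) + \gamma \E_{s' \sim p_{s,a}}[\E_{a' \sim \pi(\cdot \mid s')}[Q(s',a')]]$, is a random variable whose only source of randomness is $p_{s,a} \sim \mu_{s,a}$. For a fixed state-action pair $(s,a)$, the reward $r(s,a)$ is a deterministic constant with respect to this randomness, and the remaining term is $\gamma$ times the random variable $Z := \E_{s' \sim p_{s,a}}[\E_{a' \sim \pi(\cdot \mid s')}[Q(s',a')]]$. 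So the whole proof reduces to showing that $\rho^+$ commutes with the affine map $Z \mapsto r(s,a) + \gamma Z$.

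First I would invoke translation invariance and positive homogeneity from \defref{def:coherent}. Translation invariance lets me pull the constant $r(s,a)$ outside the risk measure, and positive homogeneity (using $\gamma \geq 0$) lets me pull out the scalar $\gamma$, yielding $\rho^+_{p_{s,a} \sim \mu_{s,a}}(\mathcal{T}^{\pi}_{p,r} Q(s,a)) = r(s,a) + \gamma \rho^+_{p_{s,a} \sim \mu_{s,a}}(Z)$. Comparing this with \defref{def:ramu_bellman} shows the right-hand side equals $\mathcal{T}^{\pi}_{\rho^+,r} Q(s,a)$, which closes the reward case.

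The one point requiring care is that \defref{def:coherent} states these properties for the cost-oriented risk measure $\rho$, whereas the reward identity is phrased in terms of $\rho^+$, defined by $\rho^+(Z) = -\rho(-Z)$. I would therefore first verify that $\rho^+$ inherits both properties: translation invariance via $\rho^+(Z + c) = -\rho(-Z - c) = -(\rho(-Z) - c) = \rho^+(Z) + c$, and positive homogeneity via $\rho^+(\lambda Z) = -\rho(-\lambda Z) = -\lambda\,\rho(-Z) = \lambda\,\rho^+(Z)$ for $\lambda \geq 0$. These two short sign computations are the only substantive content of the argument; everything else is bookkeeping about which quantities are constant versus random with respect to $\mu_{s,a}$.

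I do not anticipate a genuine obstacle, since the statement is a direct consequence of the defining axioms of coherent risk measures applied to an affine function of the underlying random variable. The closest thing to a subtlety is keeping the sign convention straight when transferring translation invariance and positive homogeneity from $\rho$ to $\rho^+$, which the verification above resolves; note in particular that convexity and monotonicity play no role here, so the identity in fact holds for any translation-invariant, positively homogeneous risk measure.
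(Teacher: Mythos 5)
Your proof is correct and matches the paper's argument: both reduce the identity to pulling the constant reward/cost term and the factor $\gamma$ through the risk measure via translation invariance and positive homogeneity. The only cosmetic difference is direction --- the paper proves the cost/$\rho$ case and obtains the reward case by the sign identity $\rho^+(Z) = -\rho(-Z)$, while you prove the reward/$\rho^+$ case after explicitly verifying (correctly) that $\rho^+$ inherits the two axioms from $\rho$.
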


\begin{proof}
The results follow from the definitions of $\mathcal{T}^{\pi}_{p,r}$ and $\mathcal{T}^{\pi}_{p,c}$, along with the translation invariance and positive homogeneity of coherent distortion risk measures. See the Appendix for details.
\end{proof}

In the next section, we show that $\mathcal{T}^{\pi}_{\rho^+,r}$ and $\mathcal{T}^{\pi}_{\rho,c}$ are contraction operators, so we can apply standard temporal difference learning techniques to learn the RAMU Q functions $Q^{\pi}_{\rho^+,r}(s,a)$ and $Q^{\pi}_{\rho,c}(s,a)$ that are needed for our RAMU policy update in \eqref{eq:ramu_rl_update}.


\section{Robustness guarantees}

Intuitively, our risk-averse perspective places more emphasis on potential transition models that result in higher costs or lower rewards under the current policy, which should result in learning safe policies that are robust to model uncertainty. Next, we formalize the robustness guarantees of our RAMU framework by showing it is equivalent to a distributionally robust safe RL problem for appropriate choices of ambiguity sets.

\begin{theorem}\label{thm:dr}
The RAMU safe RL problem in \eqref{eq:ramu_rl} is equivalent to the distributionally robust safe RL problem
\begin{equation}\label{eq:dr_rl}
\max_{\pi} \,\, \inf_{\beta \in \mathcal{U}^+} \, \E_{p \sim \beta} \left[ J_{p,r}(\pi) \right] \quad \textnormal{s.t.} \quad \sup_{\beta \in \mathcal{U}} \, \E_{p \sim \beta} \left[ J_{p,c}(\pi) \right] \leq B
\end{equation}
with ambiguity sets $\mathcal{U}^+ = \bigotimes_{(s,a) \in \mathcal{S} \times \mathcal{A}} \mathcal{U}^+_{s,a}$ and $\mathcal{U} = \bigotimes_{(s,a) \in \mathcal{S} \times \mathcal{A}} \mathcal{U}_{s,a}$, where
\begin{equation*}
    \mathcal{U}^+_{s,a}, \, \mathcal{U}_{s,a}  \subseteq \left\lbrace \beta_{s,a} \in P(\mathcal{M}) \mid \beta_{s,a} = \xi_{s,a} \mu_{s,a}, \,\, \xi_{s,a} \in \mathcal{Z}^*   \right\rbrace 
\end{equation*}
are sets of feasible reweightings of $\mu_{s,a}$ with $\xi_{s,a}$ that depend on the choice of $\rho^+$ and $\rho$, respectively.
\end{theorem}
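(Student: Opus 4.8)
The plan is to base the equivalence on the dual (envelope) representation of coherent risk measures, then lift it from the per-transition RAMU Bellman operators of \defref{def:ramu_bellman} to the trajectory-level functionals $J_{\rho^+,r}$ and $J_{\rho,c}$, using rectangularity to interchange the worst-case reweighting with the dynamic-programming recursion.

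First I would invoke the representation theorem for coherent risk measures: every coherent $\oprho$ on $\mathcal{Z}$ can be written as
\[
\oprho_{p_{s,a} \sim \mu_{s,a}}(Z) = \sup_{\xi_{s,a} \in \mathcal{U}_{s,a}} \E_{p_{s,a} \sim \mu_{s,a}}\left[ \xi_{s,a} Z \right],
\]
where $\mathcal{U}_{s,a} \subseteq \mathcal{Z}^*$ is a closed convex set of densities with $\xi_{s,a} \geq 0$ and $\E_{p_{s,a} \sim \mu_{s,a}}[\xi_{s,a}] = 1$. These two properties are exactly what make each $\beta_{s,a} = \xi_{s,a}\mu_{s,a}$ a probability measure on $\mathcal{M}$, so $\mathcal{U}_{s,a}$ is a set of feasible reweightings of $\mu_{s,a}$ as claimed. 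Since $\oprhoplus(Z) = -\oprho(-Z)$, the reward side converts the supremum into an infimum, $\oprhoplus_{p_{s,a} \sim \mu_{s,a}}(Z) = \inf_{\xi_{s,a} \in \mathcal{U}^+_{s,a}} \E_{p_{s,a} \sim \mu_{s,a}}[\xi_{s,a} Z]$, which defines $\mathcal{U}^+_{s,a}$.

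Next I would combine \lemref{lem:ramu_bellman_reformulate} with this representation to recast the RAMU Bellman operators as distributionally robust Bellman operators: for the cost,
\[
\mathcal{T}^{\pi}_{\rho,c} Q(s,a) = \sup_{\beta_{s,a} \in \mathcal{U}_{s,a}} \E_{p_{s,a} \sim \beta_{s,a}}\left[ \mathcal{T}^{\pi}_{p,c} Q(s,a) \right],
\]
and symmetrically with an infimum over $\mathcal{U}^+_{s,a}$ for the reward. Using that these operators are contractions (established in the next section), their unique fixed points are $Q^{\pi}_{\rho,c}$ and $Q^{\pi}_{\rho^+,r}$. I would then appeal to the standard equivalence, for rectangular ambiguity sets, between the fixed point of such a robust Bellman operator and the worst-case trajectory value over the product set. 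Taking expectations over $s \sim d_0$ and $a \sim \pi$ and again pushing the per-state-action suprema and infima through the recursion would give $J_{\rho,c}(\pi) = \sup_{\beta \in \mathcal{U}} \E_{p \sim \beta}[J_{p,c}(\pi)]$ and $J_{\rho^+,r}(\pi) = \inf_{\beta \in \mathcal{U}^+} \E_{p \sim \beta}[J_{p,r}(\pi)]$; substituting these into \eqref{eq:ramu_rl} yields \eqref{eq:dr_rl}.

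The main obstacle is this last interchange: justifying that applying the worst-case reweighting independently at each transition, as the nested risk measures and the Bellman recursion do, produces the same value as optimizing the single product ambiguity set $\mathcal{U} = \bigotimes_{(s,a)} \mathcal{U}_{s,a}$ against the trajectory-level objective $\E_{p \sim \beta}[J_{p,c}(\pi)]$. This is precisely where rectangularity is essential: because $\mathcal{U}$ factorizes over state-action pairs, the adversary's choice at one pair does not constrain its choices elsewhere, so the supremum commutes with the dynamic-programming backup and the static (one model per trajectory) and dynamic (per-step) worst cases coincide. I would make this rigorous by mirroring the perfect-duality arguments for distributionally robust MDPs, showing the robust operator is a monotone contraction whose fixed point simultaneously upper-bounds every $\E_{p \sim \beta}[J_{p,c}(\pi)]$ and is attained in the limit, with the reward direction following from $\oprhoplus(Z) = -\oprho(-Z)$. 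Some care is also needed so that measurable selections of the attaining per-state-action densities $\xi_{s,a} \in \mathcal{Z}^*$ can be assembled into a single admissible $\beta \in \mathcal{U}$.
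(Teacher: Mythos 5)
Your proposal follows essentially the same route as the paper's proof: invoke the dual representation of coherent risk measures (the paper's Lemma~\ref{lem:coherent_rep}, via \citet{shapiro_2014}) together with Lemma~\ref{lem:ramu_bellman_reformulate} to recast the RAMU Bellman operators as distributionally robust Bellman operators, identify the RAMU Q functions with their fixed points, and use rectangularity to push the per-state-action suprema and infima outside the recursion and the initial-state expectation. The only difference is one of detail: you sketch the static-versus-dynamic interchange and measurable-selection issues explicitly, whereas the paper delegates exactly these points to the distributionally robust MDP results of \citet{xu_2010} and \citet{yu_2016} — and note that your appeal to contraction is not circular, since it needs only the operator-level dual form you derive first, which is also how the paper orders the argument.
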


\begin{proof}
Using duality results for coherent risk measures \citep{shapiro_2014}, we show that the RAMU Bellman operators $\mathcal{T}^{\pi}_{\rho^+,r}$ and $\mathcal{T}^{\pi}_{\rho,c}$ are equivalent to distributionally robust Bellman operators \citep{xu_2010,yu_2016} with ambiguity sets of distributions $\mathcal{U}^+$ and $\mathcal{U}$, respectively. The RAMU Q functions are the respective fixed points of these Bellman operators, so they can be written as distributionally robust Q functions. Finally, by averaging over initial states and actions, we see that \eqref{eq:ramu_rl} is equivalent to \eqref{eq:dr_rl}. See the Appendix for details.
\end{proof}

\thmref{thm:dr} shows that the application of $\rho^+$ and $\rho$ at every timestep are equivalent to solving distributionally robust optimization problems over the ambiguity sets of distributions $\mathcal{U}^+$ and $\mathcal{U}$, respectively. This can be interpreted as adversarially reweighting $\mu_{s,a}$ with $\xi_{s,a}$ at every state-action pair. Note that worst-case distributions appear in both the objective and constraint of \eqref{eq:dr_rl}, so any policy trained with our RAMU framework is guaranteed to deliver robust performance \emph{and} robust safety. The level of robustness depends on the choice of $\rho^+$ and $\rho$, which determine the structure and size of the corresponding ambiguity sets based on their dual representations \citep{shapiro_2014}.

In addition, because \eqref{eq:ramu_rl} is equivalent to a distributionally robust safe RL problem according to \thmref{thm:dr}, we can leverage existing results for distributionally robust MDPs \citep{xu_2010,yu_2016} to show that $\mathcal{T}^{\pi}_{\rho^+,r}$ and $\mathcal{T}^{\pi}_{\rho,c}$ are contraction operators.

\begin{corollary}\label{cor:contraction}
The RAMU Bellman operators $\mathcal{T}^{\pi}_{\rho^+,r}$ and $\mathcal{T}^{\pi}_{\rho,c}$ are $\gamma$-contractions in the sup-norm.
\end{corollary}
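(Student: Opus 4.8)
The plan is to prove the contraction property directly from the defining properties of coherent risk measures, rather than routing through the distributionally robust equivalence of \thmref{thm:dr}. The central observation is that any coherent risk measure is $1$-Lipschitz with respect to the sup-norm on random variables: for cost random variables $X,Y$, setting $m = \| X - Y \|_\infty$ gives $X \le Y + m$ almost surely, so monotonicity followed by translation invariance yields $\rho(X) \le \rho(Y + m) = \rho(Y) + m$, and the symmetric inequality gives $\rho(Y) \le \rho(X) + m$, hence $|\rho(X) - \rho(Y)| \le \| X - Y \|_\infty$. First I would establish this Lipschitz bound, noting that the reward measure $\rho^+(Z) = -\rho(-Z)$ inherits both monotonicity and translation invariance through the sign change and is therefore $1$-Lipschitz as well. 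Only these two of the four coherence properties are actually needed.

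Next I would apply this bound to the cost operator. Fix a state-action pair $(s,a)$ and two candidate Q functions $Q_1,Q_2$. The cost term $c(s,a)$ cancels in the difference $\mathcal{T}^{\pi}_{\rho,c}Q_1(s,a) - \mathcal{T}^{\pi}_{\rho,c}Q_2(s,a)$, leaving $\gamma$ times the difference of the risk measures applied to the random variables $X^i_{s,a} = \E_{s' \sim p_{s,a}}[\E_{a' \sim \pi(\cdot \mid s')}[Q_i(s',a')]]$, each viewed as a function of the model $p_{s,a} \sim \mu_{s,a}$. By the Lipschitz bound, this difference is controlled by $\gamma \sup_{p_{s,a}} |X^1_{s,a} - X^2_{s,a}|$. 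Since $|X^1_{s,a} - X^2_{s,a}| = |\E_{s' \sim p_{s,a}}[\E_{a' \sim \pi(\cdot \mid s')}[Q_1(s',a') - Q_2(s',a')]]| \le \| Q_1 - Q_2 \|_\infty$ uniformly over $p_{s,a}$, taking the supremum over $(s,a)$ gives $\| \mathcal{T}^{\pi}_{\rho,c}Q_1 - \mathcal{T}^{\pi}_{\rho,c}Q_2 \|_\infty \le \gamma \| Q_1 - Q_2 \|_\infty$. The identical argument with $\rho^+$ handles the reward operator $\mathcal{T}^{\pi}_{\rho^+,r}$.

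I expect the main obstacle to be one of care rather than depth: verifying that the $1$-Lipschitz property holds in the essential-sup sense over the probability space $(\mathcal{M},\mathcal{F},\mu_{s,a})$, and confirming that monotonicity and translation invariance transfer correctly to $\rho^+$ under the sign reflection. As an alternative route, one could instead invoke \thmref{thm:dr} directly: since its proof identifies $\mathcal{T}^{\pi}_{\rho^+,r}$ and $\mathcal{T}^{\pi}_{\rho,c}$ with distributionally robust Bellman operators over the ambiguity sets $\mathcal{U}^+$ and $\mathcal{U}$, the $\gamma$-contraction property follows immediately from the established results for distributionally robust MDPs in \citep{xu_2010,yu_2016}. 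I would nonetheless present the direct argument as the primary proof, since it is self-contained and makes transparent exactly which properties of coherent distortion risk measures drive the result.
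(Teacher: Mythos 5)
Your proof is correct, but it takes a genuinely different route from the paper's. The paper proves \corref{cor:contraction} by routing through \thmref{thm:dr}: using the dual representation of coherent risk measures (\lemref{lem:coherent_rep}), it identifies $\mathcal{T}^{\pi}_{\rho,c}$ with a distributionally robust Bellman operator, then collapses each ambiguity distribution $\beta_{s,a}$ to the mixture transition model $\bar{p}^{\beta}_{s,a} = \E_{p_{s,a} \sim \beta_{s,a}}\left[ p_{s,a} \right]$ so that the operator becomes a classical robust Bellman operator over an uncertainty set $\mathcal{P}_{s,a} \subseteq P(\mathcal{S})$, and finally runs an $\epsilon$-optimal-selection argument to compare the two suprema, in the style of \citet{xu_2010} and \citet{yu_2016}. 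You bypass all of that with the $1$-Lipschitz property of $\rho$ in the essential sup-norm, which, as you correctly note, uses only monotonicity (\axref{ax:mon}) and translation invariance (\axref{ax:trans}) and transfers to $\rho^+(Z) = -\rho(-Z)$ by the sign checks you describe; combined with the bound $\left| X^1_{s,a}(p) - X^2_{s,a}(p) \right| \le \Vert Q_1 - Q_2 \Vert_\infty$, which holds for every $p$ and hence $\mu_{s,a}$-almost everywhere (so the a.e.\ form of \axref{ax:mon} suffices, the subtlety you flagged), the contraction follows in a few lines. Your route is more elementary and slightly more general: it needs neither the dual representation nor positive homogeneity or convexity, so it would cover any monotone, translation-invariant risk measure (including non-coherent distortion risk measures), and it dispenses with the $\epsilon$-bookkeeping entirely. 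What the paper's route buys instead is expository unity — the contraction falls out of the same distributionally robust identification that supplies the robustness guarantee, tying the corollary directly to the robust-MDP results it cites — and your alternative closing remark invoking \thmref{thm:dr} recovers exactly that argument.
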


\begin{proof}
Apply results from \citet{xu_2010} and \citet{yu_2016}. See the Appendix for details.
\end{proof}

Therefore, we have that $Q^{\pi}_{\rho^+,r}(s,a)$ and $Q^{\pi}_{\rho,c}(s,a)$ can be interpreted as distributionally robust Q functions by \thmref{thm:dr}, and we can apply standard temporal difference methods to learn these RAMU Q functions as a result of \corref{cor:contraction}. Importantly, \thmref{thm:dr} demonstrates the robustness properties of our RAMU framework, \emph{but it is not used to implement our approach}. Directly implementing \eqref{eq:dr_rl} would require solving for adversarial distributions over transition models throughout training. Instead, our framework provides the same robustness, but the use of risk measures leads to an efficient deep RL implementation as we describe in the following section.


\section{Model-free implementation with a single training environment}

The RAMU policy update in \eqref{eq:ramu_rl_update} takes the same form as the standard safe RL update in \eqref{eq:safe_rl_update}, except for the use of $Q^{\pi}_{\rho^+,r}(s,a)$ and $Q^{\pi}_{\rho,c}(s,a)$. Because our RAMU Bellman operators are contractions, we can learn these RAMU Q functions by applying standard temporal difference loss functions that are used throughout deep RL. In particular, we consider parameterized critics $Q_{\theta_r}$ and $Q_{\theta_c}$, and we optimize their parameters during training to minimize the loss functions
\begin{align*}
    \mathcal{L}^+(\theta_r) &= \E_{(s,a) \sim \mathcal{D}} \left[ \left( Q_{\theta_r}(s,a) - \hat{\mathcal{T}}^{\pi}_{\rho^+,r} \bar{Q}_{\theta_r}(s,a) \right)^2 \right], \\
    \mathcal{L}(\theta_c) &= \E_{(s,a) \sim \mathcal{D}} \left[ \left( Q_{\theta_c}(s,a) - \hat{\mathcal{T}}^{\pi}_{\rho,c} \bar{Q}_{\theta_c}(s,a) \right)^2 \right],
\end{align*}
where $\hat{\mathcal{T}}^{\pi}_{\rho^+,r}$ and $\hat{\mathcal{T}}^{\pi}_{\rho,c}$ represent sample-based estimates of the RAMU Bellman operators applied to target Q functions denoted by $\bar{Q}$. Therefore, we must be able to efficiently estimate the RAMU Bellman targets, which involve calculating coherent distortion risk measures that depend on the distribution $\mu_{s,a}$.


\paragraph{Sample-based estimation of risk measures} 
Using the formulation of our RAMU Bellman operators from \lemref{lem:ramu_bellman_reformulate}, we can leverage properties of distortion risk measures to efficiently estimate the results in \eqref{eq:rho_bellman} using sample-based weighted averages of standard Bellman targets. For $n$ transition models $p_{s,a}^{(i)}$, $i=1,\ldots,n$, sampled independently from $\mu_{s,a}$ and sorted according to their corresponding Bellman targets, consider the weights 
\begin{equation*}
    w_{\rho}^{(i)} = g \left( \frac{i}{n} \right) - g \left( \frac{i-1}{n} \right),
\end{equation*}
where $g$ defines the distortion risk measure $\rho$ according to \defref{def:distortion}. See \figref{fig:wang75_weights} for the distortion functions and weights associated with the risk measures used in our experiments. Then, from \citet{jones_2003} we have that
\begin{equation*}
    \sum_{i=1}^n w_{\rho^+}^{(i)} \mathcal{T}^{\pi}_{p^{(i)},r} Q(s,a), \quad \sum_{i=1}^n w_{\rho}^{(i)} \mathcal{T}^{\pi}_{p^{(i)},c} Q(s,a),
\end{equation*}
are consistent estimators of the results in \eqref{eq:rho_bellman}, where $\mathcal{T}^{\pi}_{p^{(i)},r} Q(s,a)$ are sorted in ascending order and $\mathcal{T}^{\pi}_{p^{(i)},c} Q(s,a)$ are sorted in descending order. Finally, we can replace $\mathcal{T}^{\pi}_{p^{(i)},r} Q(s,a)$ and $\mathcal{T}^{\pi}_{p^{(i)},c} Q(s,a)$ with the standard unbiased sample-based estimates
\begin{equation*}
\hat{\mathcal{T}}^{\pi}_{p^{(i)},r} Q(s,a) = r(s,a) + \gamma Q(s',a'), \quad \hat{\mathcal{T}}^{\pi}_{p^{(i)},c} Q(s,a) = c(s,a) + \gamma Q(s',a'),
\end{equation*}
where $s' \sim p_{s,a}^{(i)}$ and $a' \sim \pi(\, \cdot \mid s')$. This leads to the sample-based estimates 
\begin{equation}
    \hat{\mathcal{T}}^{\pi}_{\rho^+,r} Q(s,a) = \sum_{i=1}^n w_{\rho^+}^{(i)} \hat{\mathcal{T}}^{\pi}_{p^{(i)},r} Q(s,a), \quad \hat{\mathcal{T}}^{\pi}_{\rho,c} Q(s,a) =  \sum_{i=1}^n w_{\rho}^{(i)} \hat{\mathcal{T}}^{\pi}_{p^{(i)},c} Q(s,a), \label{eq:ramu_bellman_sample}
\end{equation}
which we use to train our RAMU Q functions. Note that the estimates in \eqref{eq:ramu_bellman_sample} can be computed very efficiently, which is a major benefit of our RAMU framework compared to robust RL methods. Next, we describe how we can sample models $p_{s,a}^{(i)}$, $i=1,\ldots,n$, from $\mu_{s,a}$, and generate state transitions from these models to use in the calculation of our sample-based Bellman targets in \eqref{eq:ramu_bellman_sample}. 


\begin{figure}[t]
\centering
\includegraphics[width=\textwidth]{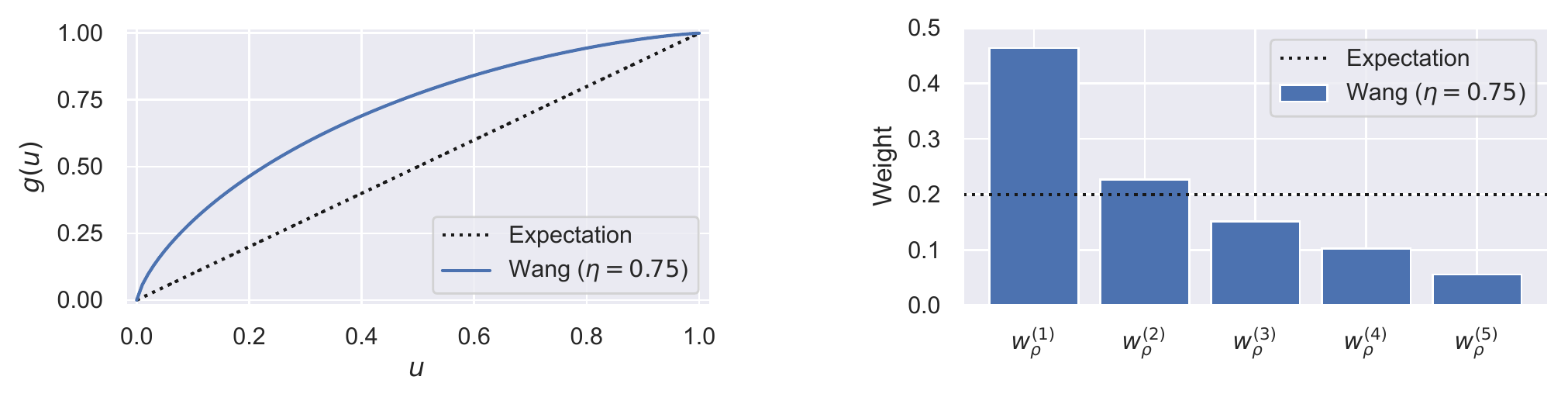}
\caption{Coherent distortion risk measures used in RAMU experiments. Left: Distortion function $g$. Right: Weights for sample-based estimates in \eqref{eq:ramu_bellman_sample} when $n=5$.}
\label{fig:wang75_weights}
\end{figure}



\paragraph{Generative distribution of transition models}
Note that our RAMU framework can be applied using any choice of distribution $\mu$, provided we can sample transition models $p_{s,a}^{(i)} \sim \mu_{s,a}$ and corresponding next states $s' \sim p_{s,a}^{(i)}$. In this work, we define the distribution $\mu$ over perturbed versions of a single training environment $p^{\textnormal{train}}$, and we propose a generative approach to sampling transition models and corresponding next states that only requires data collected from $p^{\textnormal{train}}$. By doing so, our RAMU framework achieves robust performance and safety with minimal assumptions on the training process, and can even be applied to settings that require real-world data collection for training.

We consider a latent variable $x \sim X$, and we define a transition model $p_{s,a}(x)$ for every $x \sim X$ that shifts the probability of $s'$ under $p_{s,a}^{\textnormal{train}}$ according to a perturbation function $f_x: \mathcal{S} \times \mathcal{S} \rightarrow \mathcal{S}$. This perturbation function takes as input a state transition $(s,s')$, and outputs a perturbed next state $\tilde{s}'$ that depends on the latent variable $x \sim X$. Therefore, a distribution over latent space implicitly defines a distribution $\mu_{s,a}$ over perturbed versions of $p_{s,a}^{\textnormal{train}}$. In order to obtain the next state samples needed to compute the Bellman target estimates in \eqref{eq:ramu_bellman_sample}, we sample latent variables $x \sim X$ and apply $f_x$ to the state transition observed in the training environment. We have that $s' \sim p_{s,a}^{\textnormal{train}}$ for data collected in the training environment, so $\tilde{s}' = f_x(s,s')$ represents the corresponding sample from the perturbed transition model $p_{s,a}(x)$.

In our experiments, we consider a simple implementation for the common case where $\mathcal{S} = \mathbb{R}^d$. We use uniformly distributed latent variables $x \sim U([-2\epsilon,2\epsilon]^d)$, and we define the perturbation function as 
\begin{equation*}
    f_x(s,s') = s + (s' - s)(1+x),
\end{equation*}
where all operations are performed per-coordinate. Therefore, the latent variable $x \sim U([-2\epsilon,2\epsilon]^d)$ can be interpreted as the percentage change in each dimension of a state transition observed in the training environment, where the average magnitude of the percentage change is $\epsilon$. The hyperparameter $\epsilon$ determines the distribution $\mu_{s,a}$ over transition models, where a larger value of $\epsilon$ leads to transition models that vary more significantly from the training environment. The structure of $f_x$ provides an intuitive, scale-invariant meaning for the hyperparameter $\epsilon$, which makes it easy to tune in practice. This choice of distribution $\mu_{s,a}$ captures general uncertainty in the training environment, without requiring specific domain knowledge of potential disturbances.



\begin{algorithm}[t]
   \caption{Risk-Averse Model Uncertainty for Safe RL}
   \label{alg:ramu_safe_rl}
\begin{algorithmic}
   \STATE {\bfseries Input:} policy $\pi_0$, critics $Q_{\theta_r}, Q_{\theta_c}$, risk measures $\rho^+,\rho$, latent random variable $X$
   \vspace{0.25em}
   \FOR{$k=0,1,2,\ldots$}
   \vspace{0.25em}
   \STATE Collect data $\tau \sim (\pi_k,p^{\textnormal{train}})$ and store it in $\mathcal{D}$
   \vspace{0.25em}
   
   \FOR{$K$ updates}
   \vspace{0.25em}
   \STATE Sample batch of data $(s,a,r,c,s') \sim \mathcal{D}$
   \vspace{0.4em}
   \STATE Sample $n$ latent variables $x_i \sim X$ per data point, and compute next state samples $f_{x_i}(s,s')$
   \vspace{0.4em}
   \STATE Calculate Bellman targets in \eqref{eq:ramu_bellman_sample}, and update critics $Q_{\theta_r}, Q_{\theta_c}$ to minimize $\mathcal{L}^+(\theta_r), \mathcal{L}(\theta_c)$
   \vspace{0.4em}
   \STATE Update policy $\pi$ according to \eqref{eq:ramu_rl_update}
   \vspace{0.25em}
   \ENDFOR
   
   \ENDFOR
\end{algorithmic}
\end{algorithm}


\paragraph{Algorithm}
We summarize the implementation of our RAMU framework in \algref{alg:ramu_safe_rl}. Given data collected in a single training environment, we can efficiently calculate the sample-based RAMU Bellman targets in \eqref{eq:ramu_bellman_sample} by (i)~sampling from a latent variable $x \sim X$, (ii)~computing the corresponding next state samples $f_x(s,s')$, and (iii)~sorting the standard Bellman estimates that correspond to these sampled transition models. Given the sample-based RAMU Bellman targets, updates of the critics and policy have the same form as in standard deep safe RL algorithms. Therefore, \emph{our RAMU framework can be easily combined with many popular safe RL algorithms to incorporate model uncertainty with robustness guarantees}, using only a minor change to the estimation of Bellman targets that is efficient to implement in practice.


\section{Experiments}

In order to evaluate the performance and safety of our RAMU framework, we conduct experiments on 5 continuous control tasks with safety constraints from the Real-World RL Suite \citep{dulacarnold_2020,dulacarnold_2021}: Cartpole Swingup, Walker Walk, Walker Run, Quadruped Walk, and Quadruped Run. Each task has a horizon length of $1{,}000$ with $r(s,a) \in [0,1]$ and $c(s,a) \in \left\lbrace 0,1 \right\rbrace$, and we consider a safety budget of $B=100$. Unless noted otherwise, we train these tasks on a single training environment for 1 million steps across 5 random seeds, and we evaluate performance of the learned policies across a range of perturbed test environments via 10 trajectory rollouts. See the Appendix for information on the safety constraints and environment perturbations that we consider.


\begin{table}
\caption{Aggregate performance summary}
\label{tab:experiments}
\centering
\begin{tabular}{L{0.32} C{0.09} C{0.09} C{0.09} C{0.105} C{0.105} }
\toprule
          &         & \multicolumn{2}{M{0.18}{1}}{Normalized Ave.\textsuperscript{$\ddag$}} & \multicolumn{2}{M{0.21}{1}}{Rollouts Require\textsuperscript{$\ast$}} \\ \cmidrule(lr){3-4} \cmidrule(lr){5-6} 
Algorithm & \% Safe\textsuperscript{$\dag$} & Reward   & Cost & Adversary & Simulator \\
\midrule
Safe RL               & 51\% & 1.00 & 1.00 & No & No \\
\textbf{RAMU (Wang 0.75)} & \textbf{80\%} & \textbf{1.08} & \textbf{0.51} & \textbf{No} & \textbf{No} \\
RAMU (Expectation) & 74\% & 1.05 & 0.67 & No & No \\
Domain Randomization          & 76\% & 1.14 & 0.72 & No & Yes \\
Domain Randomization (OOD)    & 55\% & 1.02 & 1.02 & No & Yes \\
Adversarial RL     & 82\% & 1.05 & 0.48 & Yes & No \\
\midrule
\multicolumn{6}{l}{\footnotesize{\textsuperscript{$\dag$} Percentage of policies that satisfy the safety constraint across all tasks and test environments.}} \\
\multicolumn{6}{l}{\footnotesize{\textsuperscript{$\ddag$} Normalized relative to the average performance of standard safe RL for each task and test environment.}} \\
\multicolumn{6}{l}{\footnotesize{\textsuperscript{$\ast$} Denotes need for adversary or simulator during data collection (i.e., trajectory rollouts) for training.}} \\
\bottomrule
\end{tabular}
\end{table}



\begin{figure}[t]
\begin{center}
\includegraphics[width=\textwidth]{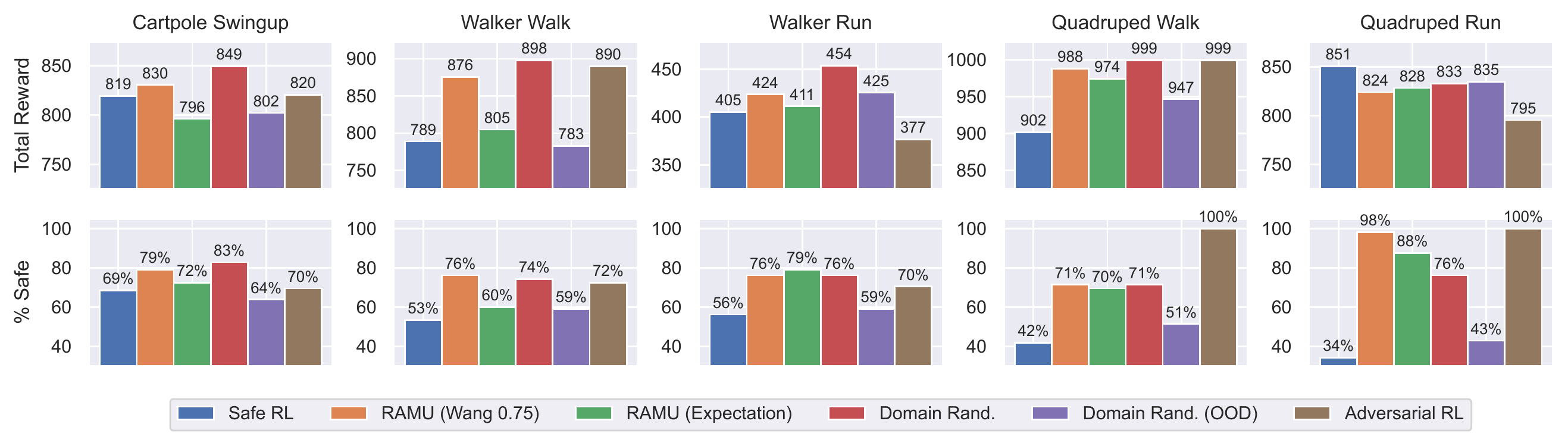}
\caption{Performance summary by task, aggregated across perturbed test environments. Performance of adversarial RL is evaluated without adversarial interventions. Top: Total rewards averaged across test environments. Bottom: Percentage of policies across test environments that satisfy the safety constraint.}
\label{fig:all_algs}
\end{center}
\end{figure}


\setcounter{footnote}{0}

Our RAMU framework can be combined with several choices of safe RL algorithms. We consider the safe RL algorithm Constraint-Rectified Policy Optimization (CRPO) \citep{xu_2021}, and we use Maximum a Posteriori Policy Optimization (MPO) \citep{abdolmaleki_2018} as the unconstrained policy optimization algorithm in CRPO. For a fair comparison, we apply this choice of safe RL policy update in every method we consider in our experiments. We use a multivariate Gaussian policy with learned mean and diagonal covariance at each state, along with separate reward and cost critics. We parameterize our policy and critics using neural networks. See the Appendix for implementation details.\footnote{Code is publicly available at \url{https://github.com/jqueeney/robust-safe-rl}.}

We summarize the performance and safety of our RAMU framework in \tabref{tab:experiments} and \figref{fig:all_algs}, compared to several baseline algorithms that we discuss next. We include detailed experimental results across all perturbed test environments in the Appendix. We apply our RAMU framework using the Wang transform with $\eta=0.75$ as the risk measure in both the objective and constraint. In order to understand the impact of being risk-averse to model uncertainty, we also consider the risk-neutral special case of our framework where expectations are applied to the objective and constraint. For our RAMU results in \tabref{tab:experiments} and \figref{fig:all_algs}, we specify the risk measure in parentheses. Finally, we consider $n=5$ samples of transition models with latent variable hyperparameter $\epsilon=0.10$ in order to calculate Bellman targets in our RAMU framework.


\paragraph{Comparison to safe reinforcement learning}

First, we analyze the impact of our RAMU framework compared to standard safe RL. In both cases, we train policies using data collected from a single training environment, so the only difference comes from our use of risk-averse model uncertainty to learn RAMU Q functions. By evaluating the learned policies in perturbed test environments different from the training environment, we see that our RAMU framework provides robustness in terms of both total rewards and safety. In particular, the risk-averse implementation of our algorithm leads to safety constraint satisfaction in 80\% of test environments, compared to only 51\% with standard safe RL. In addition, this implementation results in higher total rewards (1.08x) and lower total costs (0.51x), on average. We see in \tabref{tab:experiments} that the use of expectations over model uncertainty (i.e., a risk-neutral approach) also improves robustness in both the objective and constraint, on average, compared to standard safe RL. However, we further improve upon the benefits observed in the risk-neutral case by instead applying a risk-averse perspective.


\paragraph{Comparison to domain randomization}

Next, we compare our RAMU framework to domain randomization, a popular approach that also represents model uncertainty using a distribution $\mu$ over models. Note that domain randomization considers parametric uncertainty and has the benefit of training on a range of simulated environments, while our method only collects data from a single training environment. In order to evaluate the importance of domain knowledge for defining the training distribution in domain randomization, we consider two different cases: an in-distribution version that trains on a subset of the perturbed test environments, and an out-of-distribution (OOD) version that randomizes over a different perturbation parameter than the one varied at test time.

The results in \tabref{tab:experiments} and \figref{fig:all_algs} show the importance of domain knowledge: in-distribution domain randomization leads to improved robustness compared to standard safe RL and the highest normalized average rewards (1.14x), while the out-of-distribution version provides little benefit. In both cases, however, domain randomization achieves lower levels of safety, on average, than our risk-averse formulation. In fact, we see in \figref{fig:all_algs} that the safety constraint satisfaction of our risk-averse formulation is at least as strong as both versions of domain randomization in 4 out of 5 tasks, \emph{despite only training on a single environment with no specific knowledge about the disturbances at test time}. This demonstrates the key benefit of our risk-averse approach to model uncertainty.


\paragraph{Comparison to adversarial reinforcement learning}
Finally, we compare our approach to adversarial RL using the action-robust PR-MDP framework \citep{tessler_2019_adversarial}, which randomly applies worst-case actions a percentage of the time during data collection. Although adversarial RL only collects data from a single training environment, it requires potentially dangerous adversarial interventions during training in order to provide robustness at test time. In order to apply this method to the safe RL setting, we train an adversary to maximize costs and consider a 5\% probability of intervention during training. The performance of adversarial RL is typically evaluated without adversarial interventions, which requires a clear distinction between training and testing.

We see in \figref{fig:all_algs} that adversarial RL learns policies that achieve robust safety constraint satisfaction at test time in the Quadruped tasks. Our risk-averse formulation, on the other hand, achieves higher levels of safety in the remaining 3 out of 5 tasks, and similar levels of safety on average. Unlike adversarial RL, our RAMU framework achieves robust safety in a way that (i)~does not alter the data collection process, (ii)~does not require training an adversary in a minimax formulation, and (iii)~does not require different implementations during training and testing. In addition, our use of a distribution over models represents a less conservative approach than adversarial RL, resulting in higher normalized average rewards as shown in \tabref{tab:experiments}.


\section{Conclusion}

We have presented a framework for safe RL in the presence of model uncertainty, an important setting for many real-world decision making applications. Compared to existing approaches to model uncertainty in deep RL, our formulation applies a risk-averse perspective through the use of coherent distortion risk measures. We show that this results in robustness guarantees, while still leading to an efficient deep RL implementation that does not involve minimax optimization problems. Importantly, our method only requires data collected from a single training environment, so it can be applied to real-world domains where high-fidelity simulators are not readily available or are computationally expensive. Therefore, our framework represents an attractive approach to safe decision making under model uncertainty that can be deployed across a range of applications.

Prior to potential deployment, it is important to understand the limitations of our proposed methodology. The robustness and safety of our RAMU framework depend on the user-defined choices of model distribution $\mu$ and risk measure $\rho$. The distribution $\mu$ defines the uncertainty over transition models, and the risk measure $\rho$ defines the level of robustness to this choice of $\mu$. In addition, our approach only considers robustness with respect to model uncertainty and safety as defined by expected total cost constraints. It would be interesting to extend our techniques to address other forms of uncertainty and other definitions of safety, including epistemic uncertainty in model-based RL, observational uncertainty, and safety-critical formulations based on sets of unsafe states.


\bibliographystyle{abbrvnat}
\bibliography{Queeney_NeurIPS23_CameraReady}

\newpage
\appendix


\section{Properties of coherent distortion risk measures}

\citet{majumdar_2020} proposed a set of six axioms to characterize desirable properties of risk measures in the context of robotics.
\begin{enumerate}[label=A\arabic{*}.,ref=A\arabic{*}]
    \item \label{ax:mon} Monotonicity: If $Z,Z' \in \mathcal{Z}$ and $Z \leq Z'$ almost everywhere, then $\rho(Z) \leq \rho(Z')$.
    \item \label{ax:trans} Translation invariance: If $\alpha \in \mathbb{R}$ and $Z \in \mathcal{Z}$, then $\rho(Z+\alpha) = \rho(Z) + \alpha$.
    \item \label{ax:pos} Positive homogeneity: If $\tau \geq 0$ and $Z \in \mathcal{Z}$, then $\rho(\tau Z) = \tau \rho(Z)$.
    \item \label{ax:convex} Convexity: If $\lambda \in [0,1]$ and $Z,Z' \in \mathcal{Z}$, then $\rho(\lambda Z + (1-\lambda) Z') \leq \lambda \rho(Z) + (1-\lambda) \rho(Z')$.
    \item \label{ax:comon} Comonotonic additivity: If $Z,Z' \in \mathcal{Z}$ are comonotonic, then $\rho(Z+Z') = \rho(Z) + \rho(Z')$.
    \item \label{ax:law} Law invariance: If $Z,Z' \in \mathcal{Z}$ are identically distributed, then $\rho(Z) = \rho(Z')$.
\end{enumerate}

See \citet{majumdar_2020} for a discussion on the intuition behind these axioms. Note that coherent risk measures \citep{artzner_1999} satisfy \multiaxref{ax:mon}{ax:convex}, distortion risk measures \citep{wang_1996,dhaene_2012} satisfy \multiaxref{ax:mon}{ax:pos} and \multiaxref{ax:comon}{ax:law}, and coherent distortion risk measures satisfy all six axioms. 

The properties of coherent risk measures also lead to a useful dual representation.

\begin{lemma}[\citet{shapiro_2014}]\label{lem:coherent_rep}
Let $\rho$ be a proper, real-valued coherent risk measure. Then, for any $Z \in \mathcal{Z}$ we have that
\begin{equation*}
    \rho(Z) = \sup_{\beta_{s,a} \in \mathcal{U}_{s,a}} \Ein_{\beta_{s,a}} \left[ Z \right],
\end{equation*}
where $\Ein_{\beta_{s,a}} \left[ \, \cdot \, \right]$ represents expectation with respect to the probability measure $\beta_{s,a} \in P(\mathcal{M})$, and
\begin{equation*}
    \mathcal{U}_{s,a}  \subseteq \left\lbrace \beta_{s,a} \in P(\mathcal{M}) \mid \beta_{s,a} = \xi_{s,a} \mu_{s,a}, \,\, \xi_{s,a} \in \mathcal{Z}^*  \right\rbrace
\end{equation*}
is a convex, bounded, and weakly* closed set that depends on $\rho$.
\end{lemma}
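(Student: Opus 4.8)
The plan is to derive the representation by convex duality, specifically the Fenchel--Moreau biconjugate theorem, invoking each of the four defining properties of a coherent risk measure (\multiaxref{ax:mon}{ax:convex}) in turn. First I would fix the dual pairing $\langle \xi_{s,a}, Z \rangle = \Ein_{\mu_{s,a}}[\xi_{s,a} Z]$ between $\mathcal{Z}$ and $\mathcal{Z}^*$, and record that convexity (\axref{ax:convex}) together with positive homogeneity (\axref{ax:pos}) makes the proper, real-valued $\rho$ a sublinear functional. Granting the lower semicontinuity needed below, Fenchel--Moreau gives $\rho = \rho^{**}$, that is $\rho(Z) = \sup_{\xi_{s,a} \in \mathcal{Z}^*} \{ \langle \xi_{s,a}, Z \rangle - \rho^*(\xi_{s,a}) \}$, where $\rho^*$ is the convex conjugate of $\rho$.

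Next I would use positive homogeneity a second time to simplify the conjugate: since $\rho(t Z) = t\rho(Z)$ for $t \ge 0$, the conjugate $\rho^*$ takes only the values $0$ and $+\infty$, so it is the indicator of the convex set $\mathfrak{A}_{s,a} := \{ \xi_{s,a} \in \mathcal{Z}^* : \langle \xi_{s,a}, Z\rangle \le \rho(Z) \text{ for all } Z \in \mathcal{Z} \}$, whence $\rho(Z) = \sup_{\xi_{s,a} \in \mathfrak{A}_{s,a}} \langle \xi_{s,a}, Z\rangle$. It then remains to identify $\mathfrak{A}_{s,a}$ with a set of probability measures. Monotonicity (\axref{ax:mon}) forces each $\xi_{s,a} \ge 0$: for any $Z \ge 0$ and $t \ge 0$ we have $-tZ \le 0$, so $\langle \xi_{s,a}, -tZ\rangle \le \rho(-tZ) \le \rho(0) = 0$, giving $\langle \xi_{s,a}, Z\rangle \ge 0$. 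Translation invariance (\axref{ax:trans}) forces normalization: applying the representation to constant random variables $Z \equiv \alpha$ and using both signs of $\alpha$ pins down $\langle \xi_{s,a}, 1\rangle = \Ein_{\mu_{s,a}}[\xi_{s,a}] = 1$. Thus each $\xi_{s,a}$ is a density with respect to $\mu_{s,a}$, the measure $\beta_{s,a} = \xi_{s,a}\mu_{s,a}$ is a probability measure on $\mathcal{M}$, and $\langle \xi_{s,a}, Z\rangle = \Ein_{\beta_{s,a}}[Z]$, which is exactly the claimed form with $\mathcal{U}_{s,a} = \{\xi_{s,a}\mu_{s,a} : \xi_{s,a} \in \mathfrak{A}_{s,a}\}$.

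Finally I would verify the structural properties of $\mathcal{U}_{s,a}$: convexity and weak* closedness are inherited from $\mathfrak{A}_{s,a}$ as the effective domain of a conjugate (an intersection of weak*-closed half-spaces), and boundedness follows from $\rho$ being finite-valued everywhere, which by a Moreau--Rockafellar argument makes $\rho$ continuous with uniformly bounded subdifferentials. The main obstacle is analytic rather than algebraic: to invoke Fenchel--Moreau I must secure the correct lower semicontinuity of $\rho$ in the weak* topology $\sigma(\mathcal{Z}, \mathcal{Z}^*)$ (the Fatou property), and this is precisely what guarantees that the representing functionals are genuine countably additive probability measures of the form $\xi_{s,a}\mu_{s,a}$ rather than merely finitely additive set functions. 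Establishing this continuity, together with the attendant choice of pairing between $\mathcal{Z}$ and its order-continuous dual, is the delicate part; since the statement is quoted from \citet{shapiro_2014}, I would either appeal directly to their hypotheses guaranteeing the Fatou property or impose it as part of the ambient assumptions on $\mathcal{Z}$.
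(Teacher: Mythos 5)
The paper does not actually prove \lemref{lem:coherent_rep}; it quotes it from \citet{shapiro_2014} and defers entirely to that reference, so your proposal is being compared against a citation rather than an internal argument. Your proof is correct and is essentially the argument in the cited source (Theorem~6.7 together with Proposition~6.6 there): Fenchel--Moreau biconjugation, positive homogeneity collapsing $\rho^*$ to the indicator of the set $\mathfrak{A}_{s,a}$ of feasible densities, monotonicity (\axref{ax:mon}) forcing $\xi_{s,a} \geq 0$, and translation invariance (\axref{ax:trans}) forcing $\Ein_{\mu_{s,a}}[\xi_{s,a}] = 1$, exactly as you lay out. The one caveat you rightly flag --- lower semicontinuity --- is discharged in that reference by the result that a real-valued convex, monotone functional on an $L_p$ space with $1 \leq p < \infty$ is automatically norm-continuous, which also yields boundedness of the representing set via the subdifferential at $0$; your observation that the genuinely delicate point is the choice of pairing (countably versus finitely additive representers, relevant on $L_\infty$) is precisely the issue the cited hypotheses are designed to avoid.
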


See \citet{shapiro_2014} for a general treatment of this result.


\section{Proofs}

In this section, we prove all results related to the RAMU cost Bellman operator $\mathcal{T}^{\pi}_{\rho,c}$. Using the fact that $\rho^+(Z) = - \rho(-Z)$ for a coherent distortion risk measure $\rho$ on a cost random variable, all results related to the RAMU reward Bellman operator follow by an appropriate change in sign.


\subsection{Proof of \lemref{lem:ramu_bellman_reformulate}}

\begin{proof}
Starting from the definition of $\mathcal{T}^{\pi}_{\rho,c}$ in \defref{def:ramu_bellman}, we have that 
\begin{align*}
\mathcal{T}^{\pi}_{\rho,c} Q(s,a) &= c(s,a) + \gamma \oprho_{p_{s,a} \sim \mu_{s,a}} \left( \E_{s' \sim p_{s,a}} \left[ \E_{a' \sim \pi(\cdot \mid s')} \left[ Q(s',a')  \right] \right] \right) \\
&= c(s,a) + \oprho_{p_{s,a} \sim \mu_{s,a}} \left( \gamma \E_{s' \sim p_{s,a}} \left[ \E_{a' \sim \pi(\cdot \mid s')} \left[ Q(s',a')  \right] \right] \right) \\
&= \oprho_{p_{s,a} \sim \mu_{s,a}} \left( c(s,a) +  \gamma \E_{s' \sim p_{s,a}} \left[ \E_{a' \sim \pi(\cdot \mid s')} \left[ Q(s',a')  \right] \right] \right) \\
&= \oprho_{p_{s,a} \sim \mu_{s,a}} \left( \mathcal{T}^{\pi}_{p,c} Q(s,a) \right),
\end{align*}
which proves the result. Note that the second equality follows from the positive homogeneity of $\rho$ (\axref{ax:pos}), the third equality follows from the translation invariance of $\rho$ (\axref{ax:trans}), and the fourth equality follows from the definition of the standard cost Bellman operator $\mathcal{T}^{\pi}_{p,c}$. 
\end{proof}


\subsection{Proof of \thmref{thm:dr}}

\begin{proof}
First, we show that $\mathcal{T}^{\pi}_{\rho,c}$ is equivalent to a distributionally robust Bellman operator. For a given state-action pair, we apply \lemref{lem:coherent_rep} to the risk measure that appears in the formulation of $\mathcal{T}^{\pi}_{\rho,c}$ given by \lemref{lem:ramu_bellman_reformulate}. By doing so, we have that
\begin{align*}
\mathcal{T}^{\pi}_{\rho,c} Q(s,a) &= \oprho_{p_{s,a} \sim \mu_{s,a}} \left( \mathcal{T}^{\pi}_{p,c} Q(s,a) \right) \\
&= \sup_{\beta_{s,a} \in \mathcal{U}_{s,a}} \,\,  \E_{p_{s,a} \sim \beta_{s,a}} \left[ \mathcal{T}^{\pi}_{p,c} Q(s,a) \right] \\
&= c(s,a) + \gamma \sup_{\beta_{s,a} \in \mathcal{U}_{s,a}} \,\,  \E_{p_{s,a} \sim \beta_{s,a}} \left[ \E_{s' \sim p_{s,a}} \left[ \E_{a' \sim \pi(\cdot \mid s')} \left[ Q(s',a')  \right] \right] \right],
\end{align*}
where $\mathcal{U}_{s,a}$ is defined in \lemref{lem:coherent_rep}. Therefore, $\mathcal{T}^{\pi}_{\rho,c}$ has the same form as a distributionally robust Bellman operator \citep{xu_2010,yu_2016} with the ambiguity set $\mathcal{U} = \bigotimes_{(s,a) \in \mathcal{S} \times \mathcal{A}} \mathcal{U}_{s,a}$. The RAMU cost Q function $Q^{\pi}_{\rho,c}(s,a)$ is the fixed point of $\mathcal{T}^{\pi}_{\rho,c}$, so it is equivalent to a distributionally robust Q function with ambiguity set $\mathcal{U}$. Using the rectangularity of $\mathcal{U}$, we can write this succinctly as
\begin{equation*}
    Q^{\pi}_{\rho,c}(s,a) = \sup_{\beta \in \mathcal{U}} \, \E_{p \sim \beta} \left[ Q^{\pi}_{p,c}(s,a) \right].
\end{equation*}
Then, using the definition of $J_{\rho,c}(\pi)$ we have that
\begin{align*}
    J_{\rho,c}(\pi) &= \E_{s \sim d_0} \left[ \E_{a \sim \pi(\cdot \mid s)} \left[ Q^{\pi}_{\rho,c}(s,a) \right] \right] \\
    &= \E_{s \sim d_0} \left[ \E_{a \sim \pi(\cdot \mid s)} \left[ \, \sup_{\beta \in \mathcal{U}} \, \E_{p \sim \beta} \left[ Q^{\pi}_{p,c}(s,a) \right] \right] \right] \\
    &= \sup_{\beta \in \mathcal{U}} \, \E_{p \sim \beta} \left[ \E_{s \sim d_0} \left[ \E_{a \sim \pi(\cdot \mid s)} \left[ Q^{\pi}_{p,c}(s,a) \right] \right] \right] \\
    &= \sup_{\beta \in \mathcal{U}} \, \E_{p \sim \beta} \left[ J_{p,c}(\pi) \right],
\end{align*}
where we can move the optimization over $\mathcal{U}$ outside of the expectation operators due to rectangularity.

We can use similar techniques to show that $\mathcal{T}^{\pi}_{\rho^+,r}$ has the same form as a distributionally robust Bellman operator with the ambiguity set $\mathcal{U}^+ = \bigotimes_{(s,a) \in \mathcal{S} \times \mathcal{A}} \mathcal{U}^+_{s,a}$, and 
\begin{equation*}
    J_{\rho^+,r}(\pi) = \inf_{\beta \in \mathcal{U}^+} \, \E_{p \sim \beta} \left[ J_{p,r}(\pi) \right].
\end{equation*}
Therefore, we have that the RAMU safe RL problem in \eqref{eq:ramu_rl} is equivalent to \eqref{eq:dr_rl}.
\end{proof}


\subsection{Proof of \corref{cor:contraction}}

Given the equivalence of $\mathcal{T}^{\pi}_{\rho^+,r}$ and $\mathcal{T}^{\pi}_{\rho,c}$ to distributionally robust Bellman operators as shown in \thmref{thm:dr}, \corref{cor:contraction} follows from results in \citet{xu_2010} and \citet{yu_2016}. We include a proof for completeness.

\begin{proof}
Due to the linearity of the expectation operator, for a given $\beta_{s,a} \in \mathcal{U}_{s,a}$ we have that
\begin{equation*}
\E_{p_{s,a} \sim \beta_{s,a}} \left[ \E_{s' \sim p_{s,a}} \left[ \E_{a' \sim \pi(\cdot \mid s')} \left[ Q(s',a')  \right] \right] \right] = \E_{s' \sim \bar{p}^{\beta}_{s,a}} \left[ \E_{a' \sim \pi(\cdot \mid s')} \left[ Q(s',a')  \right] \right],
\end{equation*}
where $\bar{p}^{\beta}_{s,a} = \Ein_{p_{s,a} \sim \beta_{s,a}} \left[ p_{s,a} \right] \in P(\mathcal{S})$ represents a mixture transition model determined by $\beta_{s,a}$. Therefore, starting from the result in \thmref{thm:dr}, we can write
\begin{align*}
\mathcal{T}^{\pi}_{\rho,c} Q(s,a) &= c(s,a) + \gamma \sup_{\beta_{s,a} \in \mathcal{U}_{s,a}} \,\,  \E_{p_{s,a} \sim \beta_{s,a}} \left[ \E_{s' \sim p_{s,a}} \left[ \E_{a' \sim \pi(\cdot \mid s')} \left[ Q(s',a')  \right] \right] \right] \\
&= c(s,a) + \gamma \sup_{\bar{p}^{\beta}_{s,a} \in \mathcal{P}_{s,a}} \,  \E_{s' \sim \bar{p}^{\beta}_{s,a}} \left[ \E_{a' \sim \pi(\cdot \mid s')} \left[ Q(s',a')  \right] \right],
\end{align*}
where
\begin{equation*}
    \mathcal{P}_{s,a} = \left\lbrace \bar{p}^{\beta}_{s,a} \in P(\mathcal{S}) \mid \bar{p}^{\beta}_{s,a} = \E_{p_{s,a} \sim \beta_{s,a}} \left[ p_{s,a} \right], \,\, \beta_{s,a} \in \mathcal{U}_{s,a} \right\rbrace.
\end{equation*}
As a result, $\mathcal{T}^{\pi}_{\rho,c}$ has the same form as a robust Bellman operator \citep{iyengar_2005,nilim_2005} with the uncertainty set $\mathcal{P} = \bigotimes_{(s,a) \in \mathcal{S} \times \mathcal{A}} \mathcal{P}_{s,a}$.

Consider Q functions $Q^{(1)}$ and $Q^{(2)}$, and denote the sup-norm by
\begin{equation*}
    \Vert Q^{(1)} - Q^{(2)}  \Vert_{\infty} = \sup_{(s,a) \in \mathcal{S} \times \mathcal{A}} \left| Q^{(1)}(s,a) - Q^{(2)}(s,a) \right|.
\end{equation*}
Fix $\epsilon > 0$ and consider $(s,a) \in \mathcal{S} \times \mathcal{A}$. Then, there exists $\bar{p}_{s,a}^{(1)} \in \mathcal{P}_{s,a}$ such that
\begin{equation*}
 \E_{s' \sim \bar{p}^{(1)}_{s,a}} \left[ \E_{a' \sim \pi(\cdot \mid s')} \left[ Q^{(1)}(s',a')  \right] \right] \geq \sup_{\bar{p}^{\beta}_{s,a} \in \mathcal{P}_{s,a}} \,  \E_{s' \sim \bar{p}^{\beta}_{s,a}} \left[ \E_{a' \sim \pi(\cdot \mid s')} \left[ Q^{(1)}(s',a')  \right] \right] - \epsilon.
\end{equation*}
We have that
\begin{align*}
&\mathcal{T}^{\pi}_{\rho,c} Q^{(1)}(s,a) - \mathcal{T}^{\pi}_{\rho,c} Q^{(2)}(s,a) \\
& = \gamma \left( \sup_{\bar{p}^{\beta}_{s,a} \in \mathcal{P}_{s,a}} \,  \E_{s' \sim \bar{p}^{\beta}_{s,a}} \left[ \E_{a' \sim \pi(\cdot \mid s')} \left[ Q^{(1)}(s',a')  \right] \right] - \sup_{\bar{p}^{\beta}_{s,a} \in \mathcal{P}_{s,a}} \,  \E_{s' \sim \bar{p}^{\beta}_{s,a}} \left[ \E_{a' \sim \pi(\cdot \mid s')} \left[ Q^{(2)}(s',a')  \right] \right] \right) \\
& \leq \gamma \left( \E_{s' \sim \bar{p}^{(1)}_{s,a}} \left[ \E_{a' \sim \pi(\cdot \mid s')} \left[ Q^{(1)}(s',a')  \right] \right] + \epsilon - \E_{s' \sim \bar{p}^{(1)}_{s,a}} \left[ \E_{a' \sim \pi(\cdot \mid s')} \left[ Q^{(2)}(s',a')  \right] \right] \right) \\
& = \gamma \E_{s' \sim \bar{p}^{(1)}_{s,a}} \left[ \E_{a' \sim \pi(\cdot \mid s')} \left[ Q^{(1)}(s',a') - Q^{(2)}(s',a')  \right] \right] + \gamma \epsilon \\
& \leq \gamma \Vert Q^{(1)} - Q^{(2)}  \Vert_{\infty} + \gamma \epsilon.
\end{align*}
A similar argument can be used to show that 
\begin{equation*}
-\gamma \Vert Q^{(1)} - Q^{(2)}  \Vert_{\infty} - \gamma \epsilon \leq \mathcal{T}^{\pi}_{\rho,c} Q^{(1)}(s,a) - \mathcal{T}^{\pi}_{\rho,c} Q^{(2)}(s,a),
\end{equation*}
so we have that
\begin{equation*}
\left| \mathcal{T}^{\pi}_{\rho,c} Q^{(1)}(s,a) - \mathcal{T}^{\pi}_{\rho,c} Q^{(2)}(s,a) \right| \leq \gamma \Vert Q^{(1)} - Q^{(2)}  \Vert_{\infty} + \gamma \epsilon.
\end{equation*}
By applying a supremum over state-action pairs on the left-hand side, we obtain
\begin{equation*}
\Vert \mathcal{T}^{\pi}_{\rho,c} Q^{(1)} - \mathcal{T}^{\pi}_{\rho,c} Q^{(2)} \Vert_{\infty} \leq \gamma \Vert Q^{(1)} - Q^{(2)}  \Vert_{\infty} + \gamma \epsilon.
\end{equation*}
Finally, since $\epsilon > 0$ was arbitrary, we have shown that $\mathcal{T}^{\pi}_{\rho,c}$ is a $\gamma$-contraction in the sup-norm.
\end{proof}


\section{Implementation details and additional experimental results}


\paragraph{Safety constraints and environment perturbations}

\begin{figure}[t]
\begin{center}
\includegraphics[width=\textwidth]{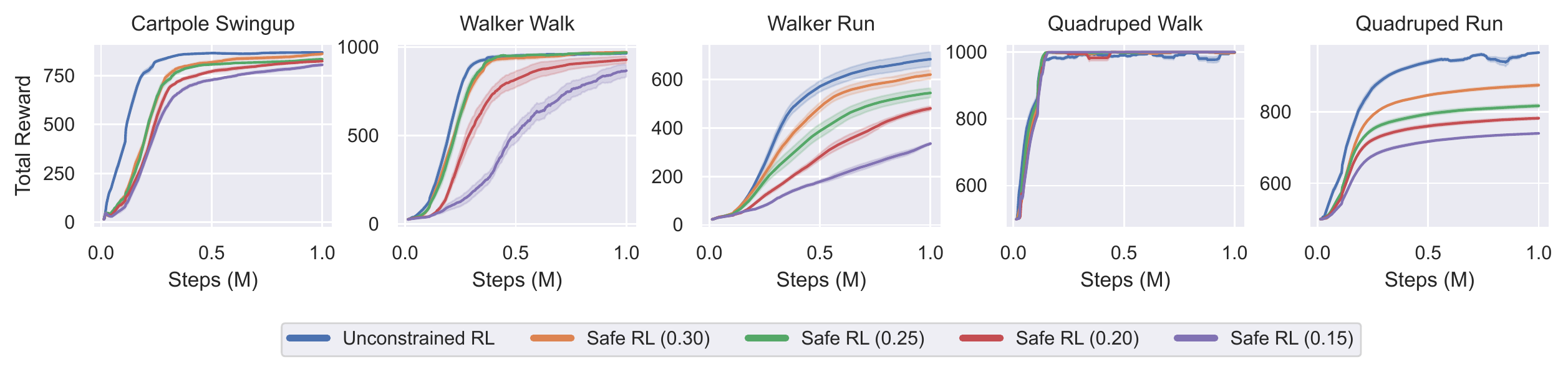}
\caption{Hyperparameter sweep of safety coefficient. Value in parentheses represents safety coefficient used for training in safe RL. Shading denotes half of one standard error across policies.}
\label{fig:safetycoeff}
\end{center}
\end{figure}



\begin{table}[t]
\caption{Safety constraints for all tasks}
\label{tab:constraints}
\centering
\begin{tabular}{llc}
\toprule
 &  & Safety \\
Task & Safety Constraint & Coefficient \\
\midrule
Cartpole Swingup  & Slider Position & 0.30 \\
Walker Walk       & Joint Velocity  & 0.25 \\
Walker Run        & Joint Velocity  & 0.30 \\
Quadruped Walk    & Joint Angle     & 0.15 \\
Quadruped Run     & Joint Angle     & 0.30 \\
\bottomrule
\end{tabular}
\end{table}


In all of our experiments, we consider the problem of optimizing a task objective while satisfying a safety constraint. We focus on a single safety constraint corresponding to a cost function defined in the Real-World RL Suite for each task, and we consider a safety budget of $B=100$. The safety constraints used for each task are described in \tabref{tab:constraints}. In the Cartpole domain, costs are applied when the slider is outside of a specified range. In the Walker domain, costs are applied for large joint velocities. In the Quadruped domain, costs are applied for large joint angles. See \citet{dulacarnold_2021} for detailed definitions of each safety constraint.

The definitions of these cost functions depend on a safety coefficient in $\left[0,1\right]$, which determines the range of outcomes that lead to constraint violations and therefore controls how difficult it will be to satisfy safety constraints corresponding to these cost functions. As the safety coefficient decreases, the range of safe outcomes also decreases and the safety constraint becomes more difficult to satisfy. In order to consider safe RL tasks with difficult safety constraints where strong performance is still possible, we selected the value of this safety constraint in the range of $[0.15,0.20,0.25,0.30]$ for each task based on the performance of the baseline safe RL algorithm CRPO compared to the unconstrained algorithm MPO. \figref{fig:safetycoeff} shows total rewards throughout training for each task across this range of safety coefficients. We selected the most difficult cost definition in this range (i.e., lowest safety coefficient value) where CRPO is still able to achieve the same total rewards as MPO (or the value that leads to the smallest gap between the two in the case of Walker Run and Quadruped Run). The resulting safety coefficients used for our experiments are listed in \tabref{tab:constraints}.

In order to evaluate the robustness of our learned policies, we generate a range of test environments for each task based on perturbing a simulator parameter in the Real-World RL Suite. See \tabref{tab:perturbations} for the perturbation parameters and corresponding ranges considered in our experiments. The test range for each domain is centered around the nominal parameter value that defines the single training environment used for all experiments except domain randomization. See \figref{fig:base_vs_ramu} for detailed results of the risk-averse and risk-neutral versions of our RAMU framework across all tasks and environment perturbations.


\begin{table}[t]
\caption{Perturbation ranges for test environments across domains}
\label{tab:perturbations}
\centering
\begin{tabular}{llcc}
\toprule
 & Perturbation & Nominal \\
Domain & Parameter & Value & Test Range \\
\midrule
Cartpole  & Pole Length      & 1.00      & $\left[ 0.75, 1.25 \right]$ \\
Walker       & Torso Length  & 0.30      & $\left[ 0.10, 0.50 \right]$ \\
Quadruped    & Torso Density & $1{,}000$ & $\left[ 500, 1{,}500 \right]$ \\
\bottomrule
\end{tabular}
\end{table}



\begin{figure}[t]
\begin{center}
\includegraphics[width=\textwidth]{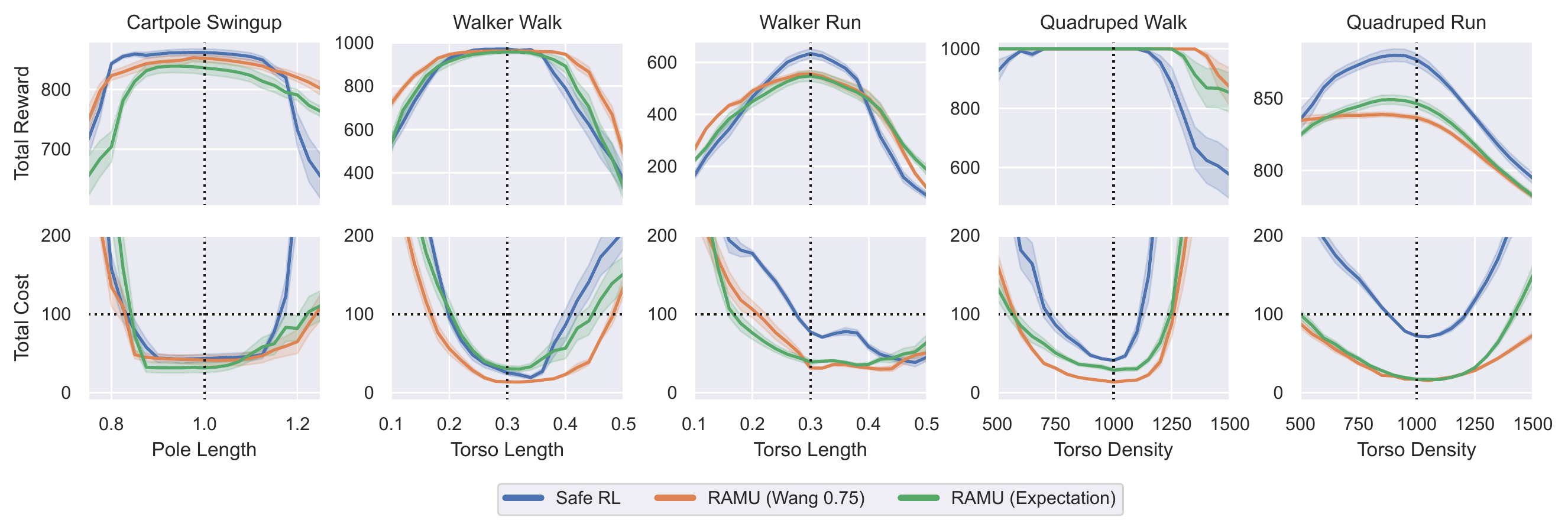}
\caption{Comparison with standard safe RL across tasks and test environments. RAMU algorithms use the risk measure in parentheses applied to both the objective and constraint. Shading denotes half of one standard error across policies. Vertical dotted lines represent nominal training environment. Top: Total reward. Bottom: Total cost, where horizontal dotted lines represent safety budget.}
\label{fig:base_vs_ramu}
\end{center}
\end{figure}



\paragraph{Domain randomization}

\begin{table}[t]
\caption{Perturbation parameters and ranges for domain randomization across domains}
\label{tab:dr_range}
\centering
\begin{tabular}{llcc}
\toprule
       & Perturbation & Nominal & Training \\
Domain & Parameter    & Value   & Range    \\
\midrule
\addlinespace
In-Distribution \\
\cmidrule(l){1-1}
Cartpole     & Pole Length   & 1.00      & $\left[ 0.875, 1.125 \right]$    \\
Walker       & Torso Length  & 0.30      & $\left[ 0.20, 0.40 \right]$     \\
Quadruped    & Torso Density & $1{,}000$ & $\left[ 750, 1{,}250 \right]$  \\
\addlinespace
Out-of-Distribution \\
\cmidrule(l){1-1}
Cartpole     & Pole Mass   & 0.10      & $\left[ 0.05, 0.15 \right]$  \\
Walker       & Contact Friction  & 0.70      & $\left[ 0.40, 1.00 \right]$ \\
Quadruped    & Contact Friction & 1.50 & $\left[ 1.00, 2.00 \right]$ \\
\addlinespace
\bottomrule
\end{tabular}
\end{table}


Domain randomization requires a training distribution over a range of environments, which is typically defined by considering a range of simulator parameters. For the in-distribution version of domain randomization considered in our experiments, we apply a uniform distribution over a subset of the test environments defined in \tabref{tab:perturbations}. In particular, we consider the middle 50\% of test environment parameter values centered around the nominal environment value for training. In the out-of-distribution version of domain randomization, on the other hand, we consider a different perturbation parameter from the one varied at test time. We apply a uniform distribution over a range of values for this alternate parameter centered around the value in the nominal environment. Therefore, the only environment shared between the set of test environments and the set of training environments used for out-of-distribution domain randomization is the nominal environment. See \tabref{tab:dr_range} for details on the parameters and corresponding ranges used for training in domain randomization.


\begin{figure}[t]
\begin{center}
\includegraphics[width=\textwidth]{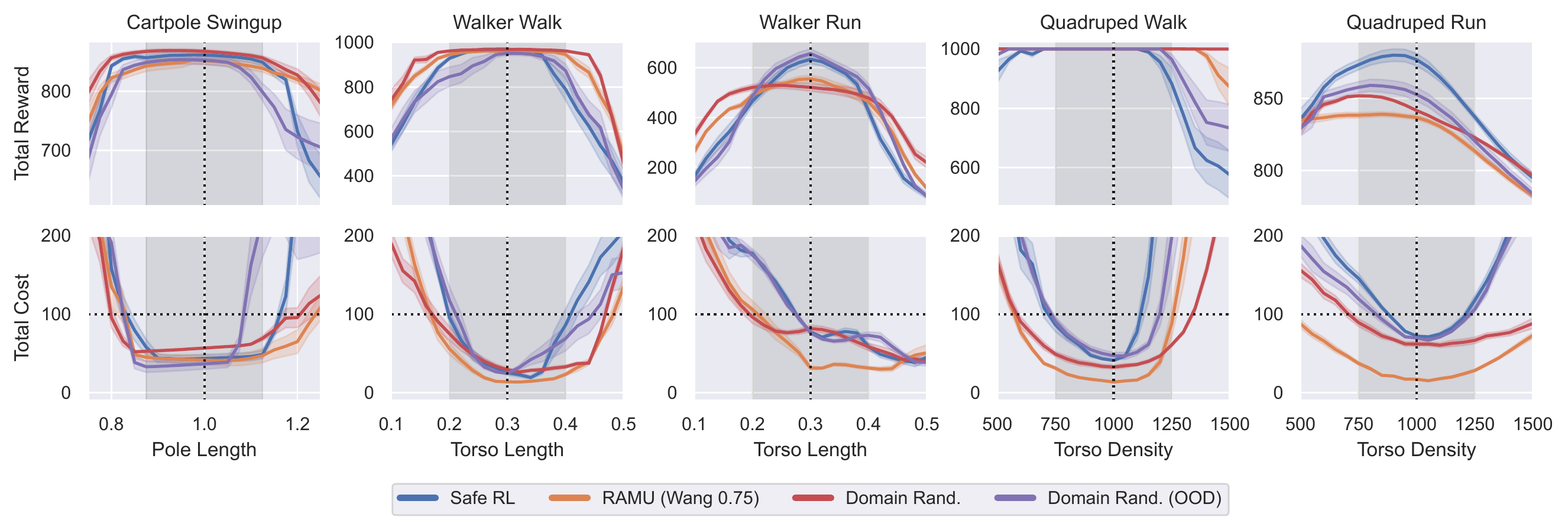}
\caption{Comparison with domain randomization across tasks and test environments. Grey shaded area denotes the training range for in-distribution domain randomization. Shading denotes half of one standard error across policies. Vertical dotted lines represent nominal training environment. Top: Total reward. Bottom: Total cost, where horizontal dotted lines represent safety budget.}
\label{fig:base_vs_ramu_vs_DR_ALL}
\end{center}
\end{figure}


We include the results for domain randomization across all tasks and environment perturbations in \figref{fig:base_vs_ramu_vs_DR_ALL}. Across all tasks, we observe that our RAMU framework leads to similar or improved constraint satisfaction compared to in-distribution domain randomization, while only using one training environment. In addition, our framework consistently outperforms out-of-distribution domain randomization, which provides little benefit compared to standard safe RL due to its misspecified training distribution. 


\paragraph{Adversarial reinforcement learning}
In order to implement the action-robust PR-MDP framework, we must train an adversarial policy. We represent the adversarial policy using the same structure and neural network architecture as our main policy, and we train the adversarial policy to maximize total costs using MPO. Using the default setting in \citet{tessler_2019_adversarial}, we apply one adversary update for every 10 policy updates.

We include the results for adversarial RL across all tasks and environment perturbations in \figref{fig:base_vs_ramu_vs_PR_ALL}, where adversarial RL is evaluated without adversarial interventions. We see that adversarial RL leads to robust safety in some cases, such as the two Quadruped tasks. However, in other tasks such as Cartpole Swingup, safety constraint satisfaction is not as robust. Safety also comes at the cost of conservative performance in some tasks, as evidenced by the total rewards achieved by adversarial RL in Walker Run and Quadruped Run. Overall, our RAMU framework achieves similar performance to adversarial RL, without the drawbacks associated with adversarial methods that preclude their use in some real-world settings.


\begin{figure}[t]
\begin{center}
\includegraphics[width=\textwidth]{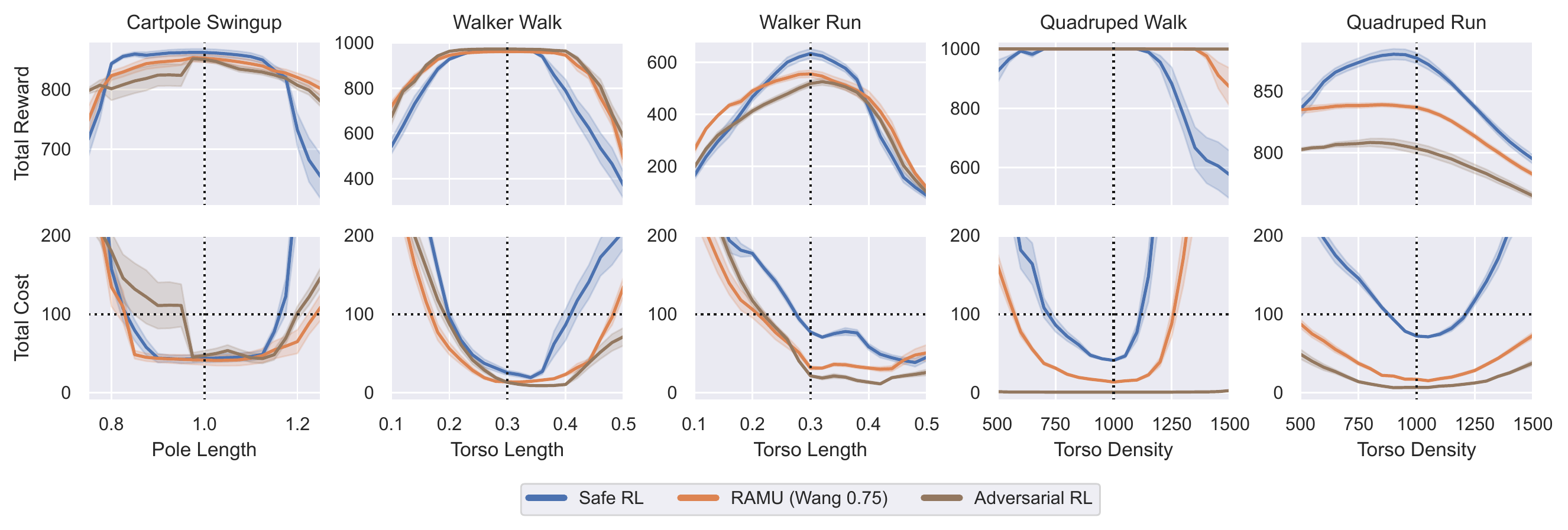}
\caption{Comparison with adversarial RL across tasks and test environments. Performance of adversarial RL is evaluated without adversarial interventions. Shading denotes half of one standard error across policies. Vertical dotted lines represent nominal training environment. Top: Total reward. Bottom: Total cost, where horizontal dotted lines represent safety budget.}
\label{fig:base_vs_ramu_vs_PR_ALL}
\end{center}
\end{figure}



\paragraph{Network architectures and algorithm hyperparameters}

In our experiments, we consider neural network representations of the policy and critics. Each of these neural networks contains 3 hidden layers of 256 units with ELU activations. In addition, we apply layer normalization followed by a tanh activation after the first layer of these networks as proposed in \citet{abdolmaleki_2020}. We consider a multivariate Gaussian policy, where at a given state we have $\pi(a \mid s) = \mathcal{N}(\mu(s),\Sigma(s))$ where $\mu(s)$ and $\Sigma(s)$ represent outputs of the policy network. $\Sigma(s)$ is a diagonal covariance matrix, whose diagonal elements are calculated by applying the softplus operator to the outputs of the neural network. We parameterize the reward and cost critics with separate neural networks. In addition, we consider target networks that are updated as an exponential moving average with parameter $\tau=5\textnormal{e-}3$.

We consider CRPO \citep{xu_2021} as the baseline safe RL algorithm in all of our experiments, which immediately switches between maximizing rewards and minimizing costs at every update based on the value of the safety constraint. If the sample-average estimate of the safety constraint for the current batch of data satisfies the safety budget, we update the policy to maximize rewards. Otherwise, we update the policy to minimize costs.

After CRPO determines the appropriate objective for the current batch of data, we apply MPO \citep{abdolmaleki_2018} to calculate policy updates. MPO calculates a non-parametric policy update based on the KL divergence parameter $\epsilon_{\textnormal{KL}}$, and then takes a step towards this non-parametric policy while constraining the KL divergence from updating the mean by $\beta_{\mu}$ and the KL divergence from updating the covariance matrix by $\beta_{\Sigma}$. We consider per-dimension KL divergence constraints by dividing these parameter values by the number of action dimensions, and we penalize actions outside of the feasible action limits using the multi-objective MPO framework \citep{abdolmaleki_2020} as suggested in \citet{hoffman_2020}. In order to avoid potential issues related to the immediate switching between reward and cost objectives throughout training, we completely solve for the temperature parameter of the non-parametric target policy in MPO at every update as done in \citet{liu_2022}. See \tabref{tab:hyper} for the default hyperparameter values used in our experiments, which are based on default values considered in \citet{hoffman_2020}.

For our RAMU framework, the latent variable hyperparameter $\epsilon$ controls the definition of the distribution $\mu_{s,a}$ over transition models. \figref{fig:epssweep} shows the performance of our RAMU framework in Walker Run and Quadruped Run for $\epsilon \in \left[ 0.05, 0.10, 0.15, 0.20 \right]$. A larger value of $\epsilon$ leads to a distribution over a wider range of transition models, which results in a more robust approach when combined with a risk-averse perspective on model uncertainty. We see in \figref{fig:epssweep} that our algorithm more robustly satisfies safety constraints as $\epsilon$ increases, but this robustness also leads to a decrease in total rewards. We consider $\epsilon = 0.10$ in our experiments, as it achieves strong constraint satisfaction without a meaningful decrease in rewards. Finally, for computational efficiency we consider $n=5$ samples of transition models per data point to calculate sample-based Bellman targets in our RAMU framework, as we did not observe meaningful improvements in performance from considering a larger number of samples.


\begin{table}[t]
\caption{Network architectures and algorithm hyperparameters used in experiments}
\label{tab:hyper}
\centering
\begin{tabular}{lcc}
\toprule
\\
General &     \\
\cmidrule(lr){1-1}
Batch size per update          && 256 \\
Updates per environment step   && 1 \\
Discount rate ($\gamma$)        && 0.99 \\
Target network exponential moving average ($\tau$)        && 5{e-}3 \\[1.0em]

Policy &     \\
\cmidrule(lr){1-1}
Layer sizes       && 256, 256, 256 \\
Layer activations && ELU \\
Layer norm + tanh on first layer && Yes \\
Initial standard deviation && 0.3 \\
Learning rate && 1{e-}4 \\
Non-parametric KL ($\epsilon_{\textnormal{KL}}$) && 0.10 \\
Action penalty KL                                && 1{e-}3 \\
Action samples per update && 20 \\
Parametric mean KL ($\beta_{\mu}$) && 0.01 \\
Parametric covariance KL ($\beta_{\Sigma}$) && 1{e-}5 \\
Parametric KL dual learning rate && 0.01 \\[1.0em]

Critics &     \\
\cmidrule(lr){1-1}
Layer sizes && 256, 256, 256 \\
Layer activations && ELU \\
Layer norm + tanh on first layer && Yes \\
Learning rate && 1{e-}4 \\[1.0em]

RAMU &     \\
\cmidrule(lr){1-1}
Transition model samples per data point ($n$) && 5 \\
Latent variable hyperparameter ($\epsilon$) && 0.10 \\\\

\bottomrule
\end{tabular}
\end{table}



\begin{figure}[t]
\begin{center}
\includegraphics[width=0.8\textwidth]{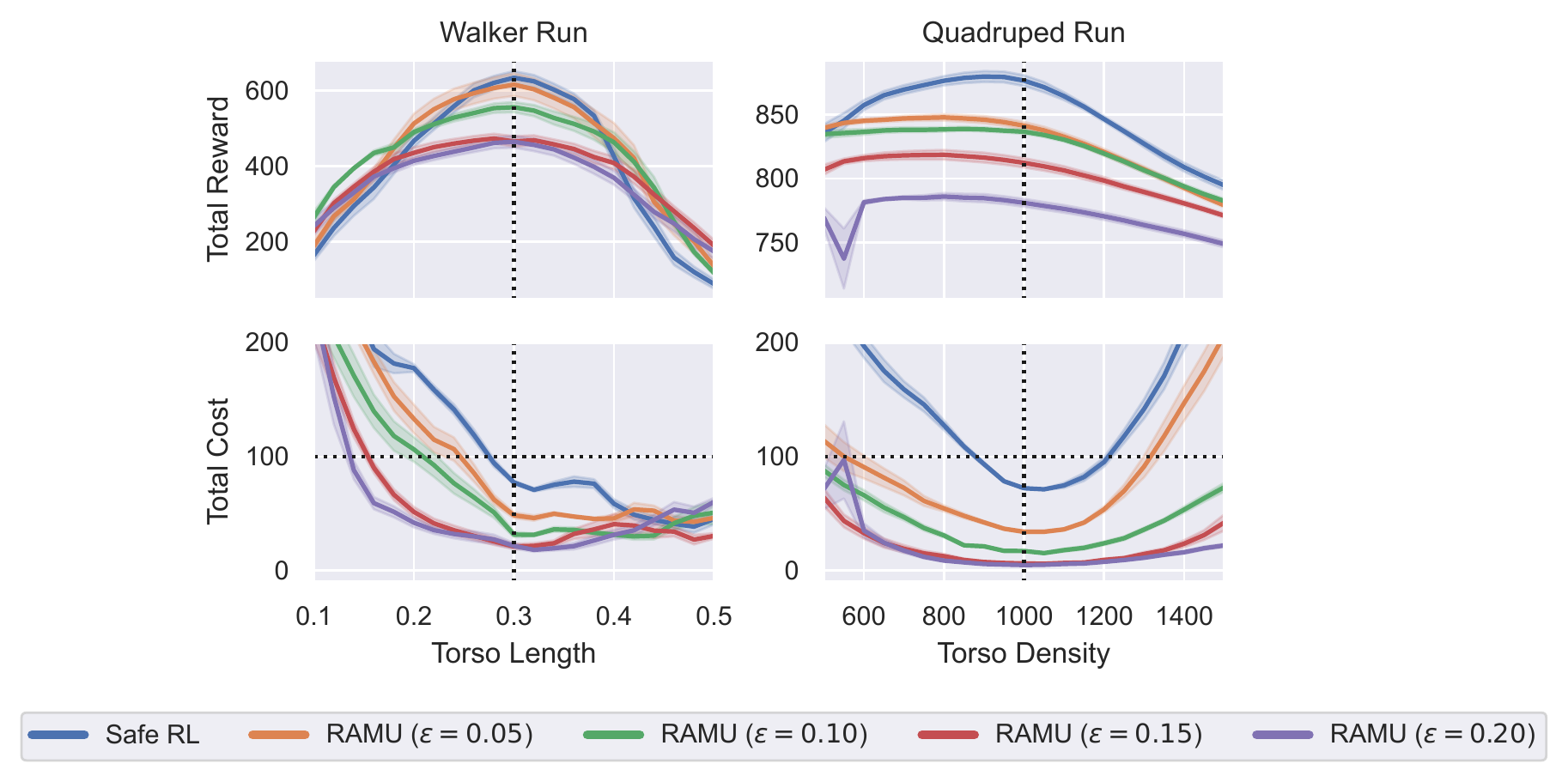}
\caption{Hyperparameter sweep of latent variable hyperparameter $\epsilon$ on Walker Run and Quadruped Run. RAMU algorithms use the Wang transform with $\eta=0.75$ applied to both the objective and constraint. Shading denotes half of one standard error across policies. Vertical dotted lines represent nominal training environment. Top: Total reward. Bottom: Total cost, where horizontal dotted lines represent safety budget.}
\label{fig:epssweep}
\end{center}
\end{figure}



\paragraph{Computational resources}
All experiments were run on a Linux cluster with 2.9 GHz Intel Gold processors and NVIDIA A40 and A100 GPUs. The Real-World RL Suite is available under the Apache License 2.0. We trained policies for 1 million steps across 5 random seeds, which required approximately one day of wall-clock time on a single GPU for each combination of algorithm and task using code that has not been optimized for execution speed.


\end{document}